\documentclass{article} 
\usepackage[preprint]{colm2026_conference}

\usepackage{microtype}
\usepackage{graphicx}
\usepackage{subcaption}
\usepackage{booktabs}
\usepackage{hyperref}
\usepackage{amsmath}
\usepackage{amssymb}
\usepackage{mathtools}
\usepackage{amsthm}
\usepackage{siunitx}
\usepackage{multirow}
\usepackage{algorithm}
\usepackage{algpseudocode}
\usepackage{enumitem}
\usepackage{colortbl}
\usepackage{wrapfig}
\usepackage{comment}
\usepackage{stfloats}
\usepackage[capitalize,noabbrev]{cleveref}
\usepackage{xcolor}
\usepackage{url}

\usepackage{amsmath,amsfonts,bm}

\def\eqref#1{equation~\ref{#1}}

\def\1{\bm{1}}

\DeclareMathAlphabet{\mathsfit}{\encodingdefault}{\sfdefault}{m}{sl}
\SetMathAlphabet{\mathsfit}{bold}{\encodingdefault}{\sfdefault}{bx}{n}

\definecolor{darkblue}{rgb}{0, 0, 0.5}
\hypersetup{colorlinks=true, citecolor=darkblue, linkcolor=darkblue, urlcolor=darkblue}

\theoremstyle{plain}
\newtheorem{theorem}{Theorem}[section]
\newtheorem{proposition}[theorem]{Proposition}

\newtheorem{definition}[theorem]{Definition}

\theoremstyle{remark}

\newcommand{\mymethodc}{CSDM}

\title{Towards Principled Dataset Distillation: A Spectral Distribution Perspective}

\author{{
    \bf Ruixi Wu$^{1}$\thanks{Equal contribution. $^\dagger$Corresponding authors.} \quad Shaobo Wang$^{1*\dagger}$ \quad Jiahuan Chen$^{1*}$ \quad Zhiyuan Liu$^{1}$ \quad Yicun Yang$^{1}$
    \vspace{3pt}
  } \\
  {
  \bf Zhaorun Chen$^{2}$ \quad Zekai Li$^{3}$ \quad Kaixin Li$^{3}$ \quad Xinming Wang$^{4}$ \quad Hongzhu Yi$^{5}$
    \vspace{3pt}
  } \\
  {
  \bf Kai Wang$^{3}$ \quad Linfeng Zhang$^{1\dagger}$
    \vspace{8pt}
  } \\
  {
  $^1$ EPIC Lab, SJTU \quad $^2$ UChicago \quad $^3$ NUS \quad $^4$ CASIA \quad $^5$ UCAS
  }
}

\begin{document}

\maketitle
\begin{abstract}
Dataset distillation (DD) aims to compress large-scale datasets into compact synthetic counterparts for efficient model training. However, existing DD methods exhibit substantial performance degradation on long-tailed datasets. We identify two fundamental challenges: \textit{heuristic design choices for distribution discrepancy measure} and \textit{uniform treatment of imbalanced classes}. To address these limitations, we propose \textbf{C}lass-Aware \textbf{S}pectral \textbf{D}istribution \textbf{M}atching (\mymethodc{}), which reformulates distribution alignment via the spectrum of a well-behaved kernel function. This technique maps the original samples into frequency space, resulting in the Spectral Distribution Distance (SDD). To mitigate class imbalance, we exploit the unified form of SDD to perform amplitude-phase decomposition, which adaptively prioritizes the realism in tail classes. On CIFAR-10-LT, with 10 images per class, \mymethodc{} achieves a 14.0\%  improvement over state-of-the-art DD methods, with only a 5.7\% performance drop when the number of images in tail classes decreases from 500 to 25, demonstrating strong stability on long-tailed data.
\end{abstract}

\section{Introduction}

Dataset distillation (DD) synthesizes a small, representative dataset from a large one to accelerate model training in applications like continual learning and federated learning~\citep{DD,DC,DM,wang2025videocompressa,wang2026grounding}. A prominent paradigm, Distribution Matching (DM)~\citep{DM,CAFE}, bypasses the bi-level optimization of earlier methods~\citep{DC,MTT} by directly aligning statistical features of the real and synthetic data.

However, real-world datasets often exhibit a long-tailed distribution~\citep{zhang2023deep}, posing significant challenges to existing distillation methods, including those based on distribution matching. Notably, when the degree of imbalance is severe, methods designed under the balanced data assumption can even underperform random selection. For example, on CIFAR-10-LT~\citep{LAD} with an imbalance factor of 200, MTT~\citep{MTT} achieves only 23.9\% accuracy, whereas DM~\citep{DM} attains 49.9\%, merely matching random selection. We attribute the drawbacks to two aspects:

\begin{figure*}[tb!]
    \centering
    \vspace{-5pt}
    \includegraphics[width=\linewidth]{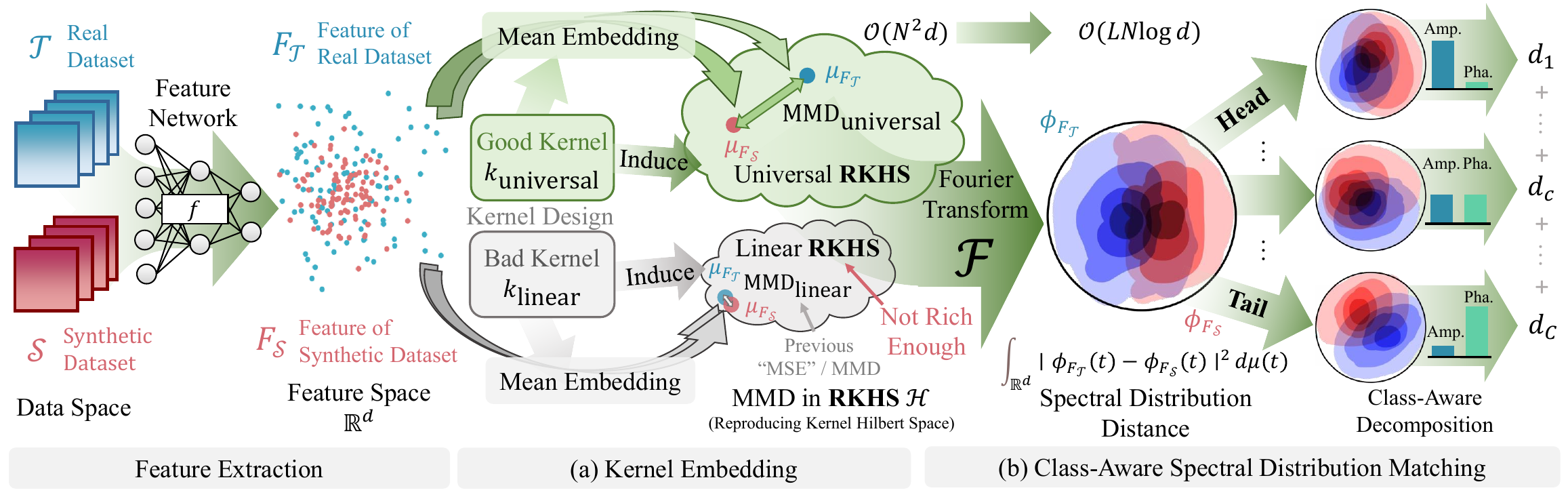}
    \vspace{-15pt}
    \caption{\textbf{Dataset Distillation with Class-Aware Spectral Distribution Matching (\mymethodc{}).} (a) \textit{Kernel Embedding.} A feature network extracts features from real and synthetic data. Unlike previous methods using a linear kernel that maps to a suboptimal RKHS, we design a universal kernel that enables accurate measurement of distribution discrepancies. (b) \textit{Class-Aware Spectral Distribution Matching.} We quantify distribution discrepancies using the Spectral Distribution Distance (SDD). By decomposing SDD into amplitude and phase and applying class-specific weights, \mymethodc{} dynamically adapts to class imbalance.}
    \vspace{-10pt}
    \label{fig:pipeline}
\end{figure*}

\noindent \textbf{Problem 1: Inadequate Alignment of Distributions.} Early approaches~\citep{DM, IDM, CAFE, IID, DataDAM} adopted Mean Squared Error (MSE) or Maximum Mean Discrepancy (MMD) losses to align distributions. However, they essentially align only the first moment, losing discriminative power when distributions share the same mean—a problem exacerbated on long-tailed datasets. From the kernel perspective (Figure~\ref{fig:pipeline}), such methods correspond to MMD with a linear kernel, whose mean embeddings cannot guarantee distinctness across distributions. Recent work like NCFM~\citep{NCFM} leverages characteristic functions but still lacks theoretical grounding. These issues call for a more principled metric that can faithfully measure distribution discrepancies.

\noindent \textbf{Problem 2: Uniform Treatment of Head and Tail Classes.} Long-tailed datasets exhibit severe imbalance: e.g., CIFAR-10-LT with imbalance factor 200 has head classes with 5000 images but tail classes with only 25. Distilled data must therefore convey richer information for tail classes to offset data scarcity. Yet existing distillation strategies treat all classes uniformly, leading to unrealistic synthesis for underrepresented categories.

To address these problems, we propose \textbf{C}lass-Aware \textbf{S}pectral \textbf{D}istribution \textbf{M}atching (\mymethodc{}).
To solve \textbf{Problem 1}, we design a principled metric. We show that MMD with a universal kernel can match full distributions. By applying Bochner's theorem, we reformulate this in the Fourier domain, yielding the \textbf{S}pectral \textbf{D}istribution \textbf{D}istance (SDD), an efficient and theoretically grounded metric for distribution matching.
To solve \textbf{Problem 2}, we decompose the SDD into amplitude and phase components, which we link to data diversity and realism, respectively. \mymethodc{} applies class-aware weights, prioritizing diversity for data-rich head classes and realism for data-scarce tail classes. This adaptive strategy preserves crucial information in tail categories. Our main contributions are:

\begin{itemize}[leftmargin=10pt, topsep=0pt, itemsep=1pt, partopsep=1pt, parsep=1pt]

    \item \textbf{Theoretical Framework of Principled DM.} We theoretically reveal the inherent limitations of traditional approaches—such as their restriction to low-order alignment and the entanglement of diversity and realism information. This leads to the reformulation of metric design as the task of designing an optimal kernel.

    \item \textbf{Class-Aware Spectral Distribution Matching (\mymethodc{}).} We introduce \textit{Spectral Distribution Distance} (SDD), a discriminative metric grounded in Bochner's theorem and kernel methods. Based on SDD, we propose \textit{Class-Aware Spectral Distribution Matching} (\mymethodc{}). By decomposing characteristic function differences into amplitude and phase components, and assigning class-specific weights, \mymethodc{} dynamically adapts to class imbalance.
    
    \item \textbf{Experimental Results.} Our proposed \mymethodc{} significantly outperforms all methods based on trivial metrics on both CIFAR-10/100-LT and ImageNet-subset-LT. Notably, \mymethodc{} surpasses the \emph{State-of-the-art} method LAD~\citep{LAD}, achieving improvements of 12.9\% and 14.3\% on CIFAR-10-LT and CIFAR-100-LT with 10 images per class (IPC).
    
\end{itemize}

\section{Related Works}
\subsection{Dataset Distillation with Distribution Matching}

Early dataset distillation methods, such as trajectory matching~\citep{MTT,TESLA,FTD,DATM} and gradient matching~\citep{DC,DCC,DSA,wang2024not}, rely on computationally expensive bi-level optimization to align training dynamics between real and synthetic data. To address this, a more efficient paradigm—distribution matching (DM)—was proposed by~\citep{DM}, which bypasses bilevel optimization by directly aligning real and synthetic data in feature space. However, most existing methods pay little attention to the design of the matching metric. For instance,~\citep{DM,IDM,CAFE,IID,DataDAM,min2025imagebinddc} adopt linear-kernel MMD, which directly compares the means of real and synthetic data, capturing only first-order moment discrepancies. M3D~\citep{M3D} first proposed a non-trivial matching metric, but due to a lack of deeper analysis, its performance is nearly indistinguishable from linear-kernel metrics. NCFM~\citep{NCFM} is the first work in dataset distillation to design the matching metric from the perspective of characteristic functions; however, its neural sampling network remains heuristic.



\subsection{Long-tailed Dataset Classification}

Long-tailed datasets are characterized by an imbalanced distribution of class frequencies, where a few head classes have abundant data and most tail classes have very few samples. A direct approach to addressing this issue is to transform the data into a balanced distribution, with common techniques including oversampling~\citep{haixiang2017learning,janowczyk2016deep,peng2020large}, undersampling~\citep{buda2018systematic,haixiang2017learning}, and data augmentation~\citep{chou2020remix,gidaris2018dynamic}. Beyond these, LAD~\citep{LAD} adopts decoupling methods~\citep{kang2019decoupling} to enhance the pre-training process, improving the model’s ability to handle long-tailed data by separating the backbone learning from classifier fine-tuning. However, existing work has primarily focused on training strategy design, with no studies exploring the development of specialized matching metrics tailored for imbalanced data.

\section{Preliminaries: Distribution Matching}

Dataset distillation originally relied on gradient~\citep{DC,IDC,DCC,DSA} or trajectory~\citep{FTD, DATM}, which incurs high computational costs due to bilevel optimization. As an alternative paradigm, Distribution Matching (DM)~\citep{DM} embeds real and synthetic data into a feature space and aligns their distributions, yielding improved efficiency and interpretability.

Formally, let $\mathcal{T}$ denote the real dataset and $\mathcal{S}$ the synthetic set. We consider a differentiable siamese augmentation operator $\mathcal{A}(\cdot,\omega)$~\citep{DSA} with randomness $\omega$, and a feature extractor $f_\vartheta$ parameterized by network weights $\vartheta\sim P_\vartheta$. DM seeks
\begin{equation}
\mathcal{S}^* = \mathop{\arg\min}_{\mathcal{S}} \mathbb{E}_{\vartheta\sim P_\vartheta} \bigl[
    d\bigl(f_\vartheta(\mathcal{A}(\mathcal{T},\omega)), f_\vartheta(\mathcal{A}(\mathcal{S},\omega))\bigr)
\bigr].
\end{equation}
where $d(\cdot,\cdot)$ measures distributional discrepancy. For the sake of notational convenience, we define $F_{\mathcal{T}} = f_\vartheta(\mathcal{A}(\mathcal{T},\omega))$ and $\quad F_{\mathcal{S}} = f_\vartheta(\mathcal{A}(\mathcal{S},\omega))$ and thereby simplify the problem to
\begin{equation}
    \mathcal{S}^* = \mathop{\arg\min}_{\mathcal{S}} \mathbb{E}_{\vartheta\sim P_\vartheta}\bigl[d(F_{\mathcal{T}},\,F_{\mathcal{S}})\bigr].
\end{equation}
\paragraph{Misnomer of Mean Square Error in DM.} Most prior works~\citep{CAFE, DM, IDM, DataDAM, IID} choose $d$ as linear-kernel MMD, which is often inappropriately referred to as ``MSE'':
\begin{equation}
    d_{\text{``MSE''}}(F_{\mathcal{T}},F_{\mathcal{S}}) 
     = \bigl\|\mathbb{E}_{x\sim F_{\mathcal{T}}}[x] \notag - \mathbb{E}_{y\sim F_{\mathcal{S}}}[y]\bigr\|^2.\label{linearmmd}
\end{equation}
Mean Square Error (MSE) is defined for paired observations $\mathcal{Y} = \{y_i\}_{i=1}^n$ and $\hat{\mathcal{Y}} = \{\hat{y}_i\}_{i=1}^n$ as
$$
\mathrm{MSE}(\mathcal{Y}, \hat{\mathcal{Y}}) = \frac{1}{n} \sum_{i=1}^n \|y_i - \hat{y}_i\|^2,
$$
where $\|\cdot\|$ denotes the Euclidean norm. While widely used in regression~\citep{theobald1974generalizations,ren2022balanced,thompson1990mse}, MSE presumes a one-to-one correspondence between real and predicted values. In distributional contexts, where no such pairing exists, MSE is not designed to capture distributional differences. Clearly, the previously used $d_{\mathrm{linearMMD}}$ is not MSE, since it does not require point-wise matching.

\begin{table*}[!b]
    \vspace{0pt}
    \caption{Comparison of commonly used kernels.}
    \label{tab:kernel-comparison}
    \vspace{0pt}
    \centering
    \renewcommand{\arraystretch}{1.5}
    \resizebox{0.7\textwidth}{!}{%
    \begin{tabular}{ccccc}
    \toprule
    Kernel & Universal & Shift-Invariant & \(K(\mathbf{x}, \mathbf{y})\) & \(p(\mathbf{t})\)\\
    \midrule
    RBF & \(\checkmark\) & \(\checkmark\) & \(\exp(-\boldsymbol{\gamma} \|\mathbf{x} - \mathbf{y}\|^2)\) & \(\propto \exp\left(-\|\mathbf{t}\|^2 / (4\boldsymbol{\gamma})\right)\) \\
    Laplace & \(\checkmark\) & \(\checkmark\) & \(\exp(-\boldsymbol{\gamma} \|\mathbf{x} - \mathbf{y}\|)\) & \(\propto \prod_{i=1}^{d} \dfrac{\boldsymbol{\gamma}}{\boldsymbol{\gamma}^2 + t_i^2}\) \\
    Linear & \(\times\) & \(\times\) & \(\mathbf{x}^\top \mathbf{y}\) & -- \\
    Polynomial & \(\times\) & \(\times\) & \((\mathbf{x}^\top \mathbf{y} + c)^d\) & -- \\
    Sigmoid & \(\times\) & \(\times\) & \(\tanh(\alpha\, \mathbf{x}^\top \mathbf{y} + c)\) & -- \\
    Cosine & \(\times\) & \(\checkmark\) & \(\cos(\omega \|\mathbf{x} - \mathbf{y}\|)\) & \(\propto \delta(\|\mathbf{t}\| - \omega)\)\\
    \bottomrule
    \end{tabular}%
    }
    \vspace{-10pt}
\end{table*}

\section{Methodology}

\subsection{Rethinking Previous Methods from a Kernel Perspective}
The core of our work is to design a better metric for matching two distributions, $P$ and $Q$, given empirical samples. This is a classic two-sample testing problem. A powerful non-parametric tool for this task is the Maximum Mean Discrepancy (MMD), which measures the distance between distributions as the norm difference of their mean embeddings in a Reproducing Kernel Hilbert Space (RKHS) $\mathcal{H}$. Specifically, the mean embedding of a distribution $P$ is defined as $\mu_P = \mathbb{E}_{x\sim P}[\phi(x)]$, where $\phi: \mathcal{X} \to \mathcal{H}$ is the feature map associated with a kernel $k$. The MMD is then expressed as:
\begin{equation}\label{mmd_repro}
    \mathrm{MMD}^2[P,Q] = \bigl\|\mu_P - \mu_Q\bigr\|_{\mathcal{H}}^2
\end{equation}
The choice of this kernel is crucial as it defines the richness of the function class used to distinguish the distributions.

A critical property for a kernel is to be \textbf{universal}~\citep{steinwart2001influence}. A universal kernel ensures its corresponding RKHS is rich enough to uniquely identify any distribution. This guarantees that $\mathrm{MMD}[P,Q] = 0$ if and only if $P=Q$, making MMD a valid metric. This property is crucial for robust distribution matching.

However, many prior works implicitly adopt a simple \textbf{linear kernel}, $k(x, y) = x^\top y$. This choice, while computationally convenient, is \textbf{not universal}. Its application reduces the MMD to merely matching the first-order moments:
\begin{equation}
\mathrm{MMD}^2_{\mathrm{linear}}[P,Q] = \bigl\|\mathbb{E}_{x\sim P}[x] - \mathbb{E}_{y\sim Q}[y]\bigr\|^2.
\end{equation}
This formulation is insensitive to higher-order statistical differences, such as variance and skewness, which are vital for capturing the full complexity of data distributions.

\subsection{Class-Aware Spectral Distribution Matching}
\subsubsection{Efficient Spectral Distribution Metric Design via Universal RKHS}

The preceding analysis reveals that overcoming the limitations of linear-kernel methods requires employing a \textbf{universal kernel}. Such kernels, like the Gaussian RBF or Laplace kernel, provide a theoretical guarantee for MMD to serve as a true metric between distributions, as formalized below.

\begin{theorem}[Property of Universal MMD~\citep{gretton2007kernel}]\label{thm:universalMMD}
Let $\mathcal{F}$ be the unit ball in a universal RKHS $\mathcal{H}$ defined on a compact metric space $\mathcal{X}$, with kernel $k(\cdot,\cdot)$. Then:
\[
\mathrm{MMD}[\mathcal{F},P,Q] = 0 \quad \text{iff} \quad P = Q.
\]
\end{theorem}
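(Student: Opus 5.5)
The plan is the standard argument of Gretton et al. The direction $P=Q \Rightarrow \mathrm{MMD}=0$ is immediate from the definition $\mathrm{MMD}[\mathcal{F},P,Q]=\sup_{f\in\mathcal{F}}\bigl(\mathbb{E}_P f-\mathbb{E}_Q f\bigr)$. For the converse, we first pass to the mean-embedding form \eqref{mmd_repro}. Since $\mathcal{X}$ is compact and $k$ is continuous (universality presupposes this), $\sup_x k(x,x)<\infty$. Hence $\phi(x)=k(x,\cdot)$ is Bochner integrable, and $\mu_P=\mathbb{E}_P[\phi(x)]$ exists in $\mathcal{H}$. The reproducing property then gives $\langle f,\mu_P\rangle_{\mathcal{H}}=\mathbb{E}_P f(x)$ for every $f\in\mathcal{H}$. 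Taking the supremum over the unit ball yields $\mathrm{MMD}[\mathcal{F},P,Q]=\|\mu_P-\mu_Q\|_{\mathcal{H}}$. Therefore $\mathrm{MMD}=0$ forces $\mathbb{E}_P f=\mathbb{E}_Q f$ for all $f\in\mathcal{H}$, not only those in the unit ball, by scaling.

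Next we use universality in Steinwart's sense: $\mathcal{H}$ is dense in $C(\mathcal{X})$ under the sup norm. Take any $g\in C(\mathcal{X})$ and $\varepsilon>0$, and choose $f\in\mathcal{H}$ with $\|f-g\|_\infty<\varepsilon$. Then
\[
|\mathbb{E}_P g-\mathbb{E}_Q g|\le |\mathbb{E}_P(g-f)|+|\mathbb{E}_P f-\mathbb{E}_Q f|+|\mathbb{E}_Q(f-g)|<2\varepsilon .
\]
Since $\varepsilon$ is arbitrary, $\mathbb{E}_P g=\mathbb{E}_Q g$ for all continuous $g$.

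The final step is to conclude $P=Q$ from agreement on $C(\mathcal{X})$. On a metric space, Borel probability measures are determined by integrals of bounded continuous functions (Dudley, Lemma 9.3.2). The direct argument goes as follows. For a closed set $C$, set $g_n(x)=\max(0,1-n\,d(x,C))$. Each $g_n$ is continuous, and $g_n\downarrow \mathbf{1}_C$ pointwise. Dominated convergence gives $P(C)=Q(C)$. Closed sets form a $\pi$-system generating the Borel $\sigma$-algebra, so Dynkin's $\pi$-$\lambda$ theorem gives $P=Q$. Alternatively, on compact $\mathcal{X}$ one can invoke the uniqueness part of the Riesz representation theorem.

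The main obstacle is the first step: justifying that the supremum over the unit ball equals the RKHS norm of $\mu_P-\mu_Q$. This requires that the mean embedding exist and satisfy the reproducing identity under the expectation. The bounded, continuous kernel on compact $\mathcal{X}$ provides exactly this. The density and measure-determination steps are then routine.
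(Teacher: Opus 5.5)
Your proposal is correct and follows essentially the same route as the paper's proof: identify $\mathrm{MMD}$ with $\|\mu_P-\mu_Q\|_{\mathcal{H}}$, use density of $\mathcal{H}$ in $C(\mathcal{X})$ to get $\mathbb{E}_P g=\mathbb{E}_Q g$ for all continuous $g$, and conclude that continuous functions determine Borel probability measures. Your justification that the mean embedding exists and your explicit closed-set/$\pi$-$\lambda$ argument (in place of the paper's appeal to Riesz representation) only fill in details the paper leaves implicit.
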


While Theorem~\ref{thm:universalMMD} establishes MMD with a universal kernel as a principled metric, its standard computation via kernel matrix evaluations faces significant practical hurdles. Specifically, it suffers from a quadratic computational complexity with respect to the sample size, making it inefficient for large datasets. Furthermore, it entangles diversity and realism into a single scalar value, offering limited interpretability and control. These challenges motivate seeking an alternative representation of the universal MMD that is both efficient and more expressive.

\subsubsection{Spectral Distribution Distance}\label{sec:SDD}
To address the aforementioned limitations of universal MMD, we adopt a Fourier-transform perspective and revisit the formulation in the spectral domain. 

\begin{theorem}[Bochner's Theorem~\citep{bochner1932vorlesungen}]\label{thm:Bochner}
Every bounded, continuous, positive-definite function is the Fourier transform of a non-negative finite Borel measure. Specifically, for any bounded shift-invariant kernel $k(\mathbf{x}, \mathbf{y})=k(\mathbf{\mathbf{x}-\mathbf{y}}) $, there exists a non-negative finite Borel measure $\mu$ such that:
\begin{equation}
    k(\mathbf{x}, \mathbf{y}) = \int_{\mathbb{R}^{d}} e^{j\,\mathbf{t}^{\top}(\mathbf{x} - \mathbf{y})} \, \mathrm{d}\mu(\mathbf{t}),
\end{equation}
where $\mu(\mathbf{t})$ is the Fourier transform of $k(\mathbf{\mathbf{x}-\mathbf{y}})$ ,i.e., $\mu(\mathbf{t}) =\mathcal{F}\{k\}(\mathbf{t})$, and the normalized version $p(\mathbf{t}) = \mu(\mathbf{t})/\int \mathrm{d}\mu(\mathbf{t})$ defines a probability distribution. Here $j = \sqrt{-1}$ denotes the imaginary unit. The measure $\mu$ is referred to as the \emph{spectral measure} of the kernel.
\end{theorem}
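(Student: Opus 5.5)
The plan is to prove the statement in the classical way, through Herglotz's discrete-time analogue and a Helly-type compactness argument; the shift-invariant kernel formulation then follows by applying the result to $\varphi(\mathbf{u}) := k(\mathbf{u})$. Positive-definiteness means that for all finite families $\mathbf{x}_1,\dots,\mathbf{x}_n$ and $c\in\mathbb{C}^n$ we have $\sum_{a,b} c_a\bar c_b\,\varphi(\mathbf{x}_a-\mathbf{x}_b)\ge 0$. Two standard consequences of this will be used throughout: $|\varphi(\mathbf{u})|\le\varphi(\mathbf{0})$, which is why boundedness is automatic, and $\varphi(-\mathbf{u})=\overline{\varphi(\mathbf{u})}$. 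The easy direction comes first and is routine. If $\varphi=\int e^{j\mathbf{t}^\top\mathbf{u}}\,\mathrm{d}\mu$ with $\mu\ge0$ finite, then the quadratic form equals $\int|\sum_a c_a e^{j\mathbf{t}^\top\mathbf{x}_a}|^2\,\mathrm{d}\mu\ge0$. Continuity follows from dominated convergence.

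For the hard direction I would first regularize. For $\sigma>0$ set $\varphi_\sigma(\mathbf{u})=\varphi(\mathbf{u})e^{-\sigma\|\mathbf{u}\|^2/2}$. This is still positive definite, since by Schur's product theorem a product of positive-definite functions is positive definite and the Gaussian is positive definite by the easy direction. It is also integrable. Define
\[
p_\sigma(\mathbf{t})=(2\pi)^{-d}\int\varphi_\sigma(\mathbf{u})e^{-j\mathbf{t}^\top\mathbf{u}}\,\mathrm{d}\mathbf{u}.
\]
The key step is showing $p_\sigma\ge0$. To do this I would write $p_\sigma(\mathbf{t})$ as a limit of normalized double integrals
\[
\frac{1}{(2\pi)^{d} |B_R|}\iint_{B_R\times B_R}\varphi_\sigma(\mathbf{x}-\mathbf{y})e^{-j\mathbf{t}^\top\mathbf{x}}e^{j\mathbf{t}^\top\mathbf{y}}\,\mathrm{d}\mathbf{x}\,\mathrm{d}\mathbf{y},
\]
which are nonnegative as Riemann-sum limits of the quadratic form with $c_a=e^{-j\mathbf{t}^\top\mathbf{x}_a}$. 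Integrability of $\varphi_\sigma$ then lets the Fejér-type weight converge to $1$ as $R\to\infty$. With $p_\sigma\ge0$ and $\varphi_\sigma$ continuous at $\mathbf{0}$, a Gaussian-summability inversion gives $\int p_\sigma=\varphi_\sigma(\mathbf{0})=\varphi(\mathbf{0})$ and $\varphi_\sigma(\mathbf{u})=\int e^{j\mathbf{t}^\top\mathbf{u}}p_\sigma(\mathbf{t})\,\mathrm{d}\mathbf{t}$. So $\mu_\sigma=p_\sigma\,\mathrm{d}\mathbf{t}$ is a nonnegative measure of total mass $\varphi(\mathbf{0})$.

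Next I would let $\sigma\to0$. The Fourier transforms $\hat\mu_\sigma=\varphi_\sigma$ converge pointwise to $\varphi$, which is continuous at $\mathbf{0}$. By Lévy's continuity theorem, applied to the probability measures $\mu_\sigma/\varphi(\mathbf{0})$ (the case $\varphi(\mathbf{0})=0$ is trivial), these measures are tight and converge weakly to some $\mu$ with $\hat\mu=\varphi$. Uniqueness of Fourier–Stieltjes transforms identifies $\mu$ as the spectral measure. Its normalization $p=\mu/\mu(\mathbb{R}^d)$ is a probability distribution because $\mu\ge0$ and $\mu(\mathbb{R}^d)=k(\mathbf{0})<\infty$. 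This yields the kernel identity $k(\mathbf{x},\mathbf{y})=\int e^{j\mathbf{t}^\top(\mathbf{x}-\mathbf{y})}\,\mathrm{d}\mu(\mathbf{t})$.

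I expect the main obstacle to be the nonnegativity of $p_\sigma$, specifically justifying the passage from the discrete quadratic form to the continuous double integral and then removing the window $B_R$. If that route gets messy, I would first prove Herglotz's theorem on the lattice $h\mathbb{Z}^d$, where the measures live on a compact torus, and then let $h\to0$, again concluding with Lévy's continuity theorem.
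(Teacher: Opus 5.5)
\textbf{Comparison with the paper.} The paper gives no proof of this statement. It cites Bochner and uses the theorem as a black box. In its actual uses (the RBF and Laplace kernels, the Monte Carlo frequency sampler, and the positive density in the class-aware discriminativity proposition) it only needs explicit spectral densities. Those can be checked by direct Fourier computation, e.g.\ $e^{-\gamma\|\mathbf{u}\|^2}=\int e^{j\mathbf{t}^\top\mathbf{u}}\,\mathrm{d}\mathcal{N}(\mathbf{0},2\gamma\mathbf{I}_d)(\mathbf{t})$. That requires only your easy direction plus a calculation.

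Your argument proves the general existence claim instead, and it is correct. It is a variant of the classical textbook proof (Fej\'er-type averaging of the quadratic form, then L\'evy's continuity theorem), with Gaussian damping supplying integrability. Each of the following steps goes through:
(i) the Riemann-sum positivity on $B_R\times B_R$;
(ii) the weight $|B_R\cap(B_R+\mathbf{u})|/|B_R|\to1$ under dominated convergence;
(iii) monotone convergence giving $\int p_\sigma=\varphi(\mathbf{0})$, followed by Gaussian-summability inversion;
(iv) the L\'evy step, which is not circular since its proof needs only uniqueness of characteristic functions.
The Herglotz route announced at the start is only needed as your fallback.

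\textbf{What your reduction does not cover.} You should say the following explicitly.

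First, the kernel clause of the statement drops continuity, and without it the claim is false. Take $k(\mathbf{u})=1$ if $\mathbf{u}=\mathbf{0}$ and $k(\mathbf{u})=0$ otherwise. It is bounded, shift-invariant and positive definite, since its Gram matrix on distinct points is the identity. But every Fourier--Stieltjes transform of a finite measure is continuous, so no such $\mu$ exists. Your step ``apply the result to $\varphi:=k$'' silently uses continuity, at least at $\mathbf{0}$ for L\'evy's theorem. Make it a hypothesis; it holds for RBF and Laplace.

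Second, the clause $\mu(\mathbf{t})=\mathcal{F}\{k\}(\mathbf{t})$ presupposes that $\mu$ has a density. This fails in general: $k\equiv1$ gives $\mu=\delta_{\mathbf{0}}$. Your proof never addresses this clause. When $k\in L^1$, as for the paper's kernels, you can run your positivity and inversion argument directly with $\sigma=0$. This yields $\mathrm{d}\mu=p\,\mathrm{d}\mathbf{t}$ with $p(\mathbf{t})=(2\pi)^{-d}\int k(\mathbf{u})e^{-j\mathbf{t}^\top\mathbf{u}}\,\mathrm{d}\mathbf{u}\ge0$. That is the precise meaning of the clause, including the $(2\pi)^{-d}$ normalization and sign convention.

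Third, the normalization $p=\mu/\mu(\mathbb{R}^d)$ also needs $k(\mathbf{0})>0$. Otherwise $k\equiv0$ by $|k|\le k(\mathbf{0})$, and $\mu=0$.
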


Invoking Theorem~\ref{thm:Bochner}, any translation‐invariant kernel $k(\mathbf{x},\mathbf{y})$ admits a unique spectral decomposition.  To proceed, recall that the characteristic function of a distribution $P$ is defined by $\phi_P(\mathbf{t})\;=\;\mathbb{E}_{\mathbf{x}\sim P}\bigl[e^{j\, \mathbf{t}^\top \mathbf{x}}\bigr].$ Exploiting the independence of two samples $\mathbf{x},\mathbf{x}'\sim P$ and the conjugation identity $\phi_P(-\mathbf{t})\;=\;\overline{\phi}_P(\mathbf{t})$, we obtain $\mathbb{E}_{\mathbf{x},\mathbf{x}'\sim P}\bigl[e^{j\,\mathbf{t}^\top(\mathbf{x} - \mathbf{x}')}\bigr]\;=\;\phi_P(\mathbf{t})\,\phi_P(-\mathbf{t})\;=\;\bigl|\phi_P(\mathbf{t})\bigr|^2.$ A straightforward rearrangement of terms then leads directly to the spectral domain expression of the squared MMD~\citep{gretton2007kernel}.
\begin{theorem}\label{thm:SDD}
For any shift-invariant kernel $k(\mathbf{x}, \mathbf{y})$ with spectral measure $\mu(\mathbf{t})$, the squared Maximum Mean Discrepancy (MMD) can be expressed in the frequency domain as:
\[
\operatorname{MMD}^2[\mathcal{F}, P, Q] 
= \int_{\mathbb{R}^d} \left|\phi_P(\mathbf{t}) - \phi_Q(\mathbf{t})\right|^2 \, \mathrm{d}\mu(\mathbf{t}),
\]
where $\phi_P(\mathbf{t}) = \mathbb{E}_{\mathbf{x} \sim P}[ e^{j\,\mathbf{t}^\top \mathbf{x}} ]$ and $\phi_{Q}(\mathbf{t}) = \mathbb{E}_{\mathbf{y} \sim Q}[ e^{j\,\mathbf{t}^\top \mathbf{y}} ]$ are the characteristic functions of distributions $P$ and $Q$, respectively.
\end{theorem}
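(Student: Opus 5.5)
The plan is to expand the squared MMD in kernel form, substitute Bochner's representation (Theorem~\ref{thm:Bochner}) into each kernel expectation, and then recognise the resulting integrand as $|\phi_P-\phi_Q|^2$.

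\textbf{Step 1: kernel form of MMD.} Starting from \eqref{mmd_repro}, I expand the RKHS norm using the reproducing property $\langle \phi(\mathbf{x}),\phi(\mathbf{y})\rangle_{\mathcal{H}}=k(\mathbf{x},\mathbf{y})$. This gives
\begin{equation*}
\operatorname{MMD}^2[\mathcal{F},P,Q]=\mathbb{E}_{\mathbf{x},\mathbf{x}'\sim P}[k(\mathbf{x},\mathbf{x}')]+\mathbb{E}_{\mathbf{y},\mathbf{y}'\sim Q}[k(\mathbf{y},\mathbf{y}')]-2\,\mathbb{E}_{\mathbf{x}\sim P,\mathbf{y}\sim Q}[k(\mathbf{x},\mathbf{y})],
\end{equation*}
where all sample pairs are independent. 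Moving expectations inside the inner product needs the mean embeddings to exist as Bochner integrals. This holds because $k$ is bounded, so $\|\phi(\mathbf{x})\|_{\mathcal{H}}^2=k(\mathbf{x},\mathbf{x})=k(\mathbf{0})<\infty$.

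\textbf{Step 2: substitute Bochner.} In each term I write $k(\mathbf{x},\mathbf{y})=\int e^{j\mathbf{t}^\top(\mathbf{x}-\mathbf{y})}\,\mathrm{d}\mu(\mathbf{t})$ and swap the expectation with the $\mu$-integral. I expect this interchange to be the only real obstacle. It is justified by Fubini--Tonelli: the integrand has modulus $1$, and $\mu$ is finite by Theorem~\ref{thm:Bochner}, so the integrand is absolutely integrable against $\mu\otimes P\otimes P$ (and likewise for the other product measures).

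After the swap, independence factorises each expectation into characteristic functions, using the identity noted just above the statement:
\begin{equation*}
\mathbb{E}_{\mathbf{x},\mathbf{x}'\sim P}\bigl[e^{j\mathbf{t}^\top(\mathbf{x}-\mathbf{x}')}\bigr]=|\phi_P(\mathbf{t})|^2,
\qquad
\mathbb{E}_{\mathbf{x}\sim P,\mathbf{y}\sim Q}\bigl[e^{j\mathbf{t}^\top(\mathbf{x}-\mathbf{y})}\bigr]=\phi_P(\mathbf{t})\overline{\phi_Q(\mathbf{t})}.
\end{equation*}

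\textbf{Step 3: handle the cross term and assemble.} The cross term produces $\int \phi_P\overline{\phi_Q}\,\mathrm{d}\mu$, which is not obviously real. Two facts resolve this:
\begin{itemize}
\item $\mathbb{E}[k(\mathbf{x},\mathbf{y})]$ is real, since $k$ is real-valued.
\item $\mu$ is symmetric, $\mu(A)=\mu(-A)$, because $k$ is real and even ($k(\mathbf{x}-\mathbf{y})=k(\mathbf{y}-\mathbf{x})$).
\end{itemize}
Together these give
\begin{equation*}
2\int \phi_P\overline{\phi_Q}\,\mathrm{d}\mu=\int\bigl(\phi_P\overline{\phi_Q}+\overline{\phi_P}\phi_Q\bigr)\,\mathrm{d}\mu=2\int\operatorname{Re}\bigl(\phi_P\overline{\phi_Q}\bigr)\,\mathrm{d}\mu .
\end{equation*}
Combining the three terms then yields $\int\bigl(|\phi_P|^2+|\phi_Q|^2-2\operatorname{Re}\,\phi_P\overline{\phi_Q}\bigr)\,\mathrm{d}\mu=\int|\phi_P(\mathbf{t})-\phi_Q(\mathbf{t})|^2\,\mathrm{d}\mu(\mathbf{t})$, which is the claimed identity.
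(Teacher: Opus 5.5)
Your proposal is correct and follows the same route as the paper: it expands $\mathrm{MMD}^2$ into the three kernel expectations, substitutes Bochner's representation, and uses independence together with $\phi_P(-\mathbf{t})=\overline{\phi_P(\mathbf{t})}$ to obtain $|\phi_P|^2$ and the cross term before rearranging. Your Fubini justification and your treatment of the cross term fill in details the paper leaves to a ``straightforward rearrangement''; either of your two facts (realness of $\mathbb{E}[k(\mathbf{x},\mathbf{y})]$ or symmetry of $\mu$) would suffice on its own there.
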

Based on Theorem~\ref{thm:SDD}, we define Spectral Distribution Distance (SDD) as follows:
\begin{equation}
\operatorname{SDD}(F_{\mathcal{T}},F_{\mathcal{S}}) = \int_{\mathbb{R}^d} \left|\phi_{F_{\mathcal{T}}}(\mathbf{t}) - \phi_{F_{\mathcal{S}}}(\mathbf{t})\right|^2 \, \mathrm{d}\mu(\mathbf{t}).
\end{equation}
In practice, we approximate the integral using Monte-Carlo sampling~\citep{hammersley2013monte}: $\mathrm{SDD}(F_{\mathcal{T}},F_{\mathcal{S}})  \approx \frac{1}{m} \sum_{i=1}^L \left| \phi_{F_{\mathcal{T}}}(\mathbf{t}_i) - \phi_{F_{\mathcal{S}}}(\mathbf{t}_i) \right|^2$. Here, $\{\mathbf{t}_i\}_{i=1}^L$ are frequency points independently and identically sampled from the spectral distribution $p(\mathbf{t})$ of the kernel.
Based on the law of large numbers, as $L \to \infty$, the estimated value converges in probability to the true integral value~\citep{hammersley2013monte}.

Remarkably, the proposed SDD enjoys a linear computational complexity with respect to the dataset size. 
To evaluate the empirical characteristic function $\phi(\cdot)$ on a dataset with $N$ samples of dimension $D$, we compute the dot product between each sample and a frequency vector, requiring $O(ND)$ operations. 
Since the approximation sums over $L$ sampled frequency points, the total time complexity of calculating SDD is $O(LND)$. 
Compared to traditional kernel-based metrics that typically scale quadratically (i.e., $O(N^2)$), the linear complexity of SDD ensures significantly better scalability for large-scale datasets.

\subsubsection{Determining Spectral Distribution of SDD}\label{sec:SamplingMethod}
We recall our expectations for the kernel function: to satisfy Theorem~\ref{thm:universalMMD}, the kernel should be \emph{universal}; to satisfy Bochner's Theorem~\ref{thm:Bochner}, the kernel should be \emph{shift-invariant}. Therefore, we summarize several commonly used kernels along with their corresponding properties in Table~\ref{tab:kernel-comparison}.

As shown in the table, only the RBF and Laplace kernels are both universal and shift-invariant. In particular, the spectral distribution of the RBF kernel is Gaussian, namely  
\[
p_{\mathrm{RBF}}(\mathbf{t}) \;\propto\;\exp\,\Bigl(-\tfrac{\|\mathbf{t}\|^2}{4\boldsymbol{\gamma}}\Bigr)
\quad\Longrightarrow\quad
\mathbf{t}\sim \mathcal{N}\bigl(\mathbf{0},\,2\boldsymbol{\gamma}\,\mathbf{I}_d\bigr).
\]
\subsubsection{Class-Aware Decomposition}
To mitigate the lack of realism in tail classes, we follow a widely-used decomposition strategy from signal processing~\citep{oppenheim1981importance,mandic2009complex}, generative modeling~\citep{RCFGAN} and domain generalization~\citep{lee2023decompose}, and decompose the discrepancy between characteristic functions into amplitude and phase components:
\[
\begin{aligned}
    &\left|\phi_{F_\mathcal{T}}(t) - \phi_{F_\mathcal{S}}(t)\right|^2 \\
    &= \left|\phi_{F_\mathcal{T}}(t)\right|^2 + \left|\phi_{F_\mathcal{S}}(t)\right|^2 \\
    &\quad - 2 \left|\phi_{F_\mathcal{T}}(t)\right| \left|\phi_{F_\mathcal{S}}(t)\right| \cos\left(\theta_{F_\mathcal{T}}(t) - \theta_{F_\mathcal{S}}(t)\right) \\
    &= \underbrace{\left|\left|\phi_{F_\mathcal{T}}(t)\right| - \left|\phi_{F_\mathcal{S}}(t)\right|\right|^2}_{\text{amplitude difference}} \\
    &\quad + 2 \left|\phi_{F_\mathcal{T}}(t)\right| \left|\phi_{F_\mathcal{S}}(t)\right| \underbrace{\left(1 - \cos\left(\theta_{F_\mathcal{T}}(t) - \theta_{F_\mathcal{S}}(t)\right)\right)}_{\text{phase difference}}
\end{aligned}
\]
The amplitude difference quantifies the discrepancy in the spectral magnitude, which reflects the diversity of the feature distribution. Phase Difference, on the other hand, captures misalignment in the phase component, which is indicative of the realism of the data~\citep{NCFM}.

In long-tailed settings, tail classes demand more realism to prevent mode collapse while head classes benefit from greater diversity. To achieve this asymmetric balance, we propose \textbf{Class-Aware Spectral Distribution Matching} (\mymethodc{}). Our method introduces a class-specific coefficient $\alpha(c) \in [0, 1]$ to adaptively weight the amplitude and phase terms, defining the class-wise discrepancy $d_c$ as:
\[
\begin{aligned}
    &d_{c}(x, y) = \int_{\mathbb{R}^d} \bigl[\alpha(c)\mathcal{A}(t)^2 + 2(1 - \alpha(c))\mathcal{M}(t)\mathcal{P}(t)\bigr] \, \mathrm{d}\mu(t), \\
    &\text{where } \quad \mathcal{A}(t) = \bigl||\phi_{F_{\mathcal{T}_c}}(t)| - |\phi_{F_{\mathcal{S}_c}}(t)|\bigr|, \\
    &\phantom{\text{where } \quad} \mathcal{M}(t) = |\phi_{F_{\mathcal{T}_c}}(t)||\phi_{F_{\mathcal{S}_c}}(t)|, \\
    &\phantom{\text{where } \quad} \mathcal{P}(t) = 1 - \cos\bigl(\theta_{F_{\mathcal{T}_c}}(t) - \theta_{F_{\mathcal{S}_c}}(t)\bigr).
\end{aligned}
\]
where $F_{\mathcal{T}_c}$ and $F_{\mathcal{S}_c}$ denote the normalized real and synthetic feature distributions for class $c$, respectively. The overall distance aggregates class-wise discrepancies:
\[
d(x, y)_{x \sim F_{\mathcal{T}}, y \sim F_{\mathcal{S}}} = \sum_{c} d_{c}(x, y)_{x \sim F_{\mathcal{T}_c}, y \sim F_{\mathcal{S}_c}}.
\]
This formulation supports class-dependent prioritization of either realism (via phase) or diversity (via amplitude), and naturally handles the challenges of class imbalance in long-tailed data synthesis.

\section{Experiments}

\begin{table*}[tb!]
    \centering
    \caption{Results on CIFAR-10-LT and CIFAR-100-LT.}
    \vspace{-5pt}
    \label{tab:cifar}
    \resizebox{0.85\textwidth}{!}{
    \begin{tabular}{c|ccc|ccc|ccc}
    \toprule
     & \multicolumn{6}{c|}{\textbf{CIFAR-10-LT}} & \multicolumn{3}{c}{\textbf{CIFAR-100-LT}} \\
    Imbalance Factor & \multicolumn{3}{c|}{10} & \multicolumn{3}{c|}{50} & \multicolumn{3}{c}{10} \\
    IPC & 10 & 20 & 50 & 10 & 20 & 50 & 10 & 20 & 50 \\
    \midrule
    Random    & 32.5{\scriptsize$\pm$2.2} & 39.6{\scriptsize$\pm$0.9}  & 51.9{\scriptsize$\pm$1.5}
              & 33.2{\scriptsize$\pm$0.4} & 42.0{\scriptsize$\pm$1.3}  & 51.6{\scriptsize$\pm$1.3}
              & 14.2{\scriptsize$\pm$0.6} & 21.7{\scriptsize$\pm$0.6}  & 32.1{\scriptsize$\pm$0.6} \\
    K-Center  & 21.9{\scriptsize$\pm$0.8} & 24.2{\scriptsize$\pm$0.8}  & 31.7{\scriptsize$\pm$0.9}
              & 17.8{\scriptsize$\pm$0.2} & 20.8{\scriptsize$\pm$0.5}  & 26.1{\scriptsize$\pm$0.2}
              & 10.7{\scriptsize$\pm$0.9} & 15.9{\scriptsize$\pm$1.0}  & 24.8{\scriptsize$\pm$0.2} \\
    Graph-Cut & 28.7{\scriptsize$\pm$0.9} & 34.2{\scriptsize$\pm$1.0}  & 40.6{\scriptsize$\pm$1.0}
              & 24.2{\scriptsize$\pm$0.7} & 28.6{\scriptsize$\pm$0.8}  & 33.9{\scriptsize$\pm$0.4}
              & 16.9{\scriptsize$\pm$0.3} & 22.2{\scriptsize$\pm$0.4}  & 29.9{\scriptsize$\pm$0.4} \\
    \midrule
    DC        & 37.9{\scriptsize$\pm$0.9} & 38.5{\scriptsize$\pm$0.9}  & 37.4{\scriptsize$\pm$1.4}
              & 37.3{\scriptsize$\pm$0.9} & 37.3{\scriptsize$\pm$0.9}  & 35.8{\scriptsize$\pm$1.2}
              & 24.0{\scriptsize$\pm$0.3} & 27.4{\scriptsize$\pm$0.3}  & 27.4{\scriptsize$\pm$0.3} \\
    MTT       & 58.0{\scriptsize$\pm$0.8} & 59.5{\scriptsize$\pm$0.4}  & 62.0{\scriptsize$\pm$0.9}
              & 45.8{\scriptsize$\pm$1.4} & 49.9{\scriptsize$\pm$0.8}  & 53.6{\scriptsize$\pm$0.5}
              & 14.3{\scriptsize$\pm$0.1} & 16.7{\scriptsize$\pm$0.2}  & 13.8{\scriptsize$\pm$0.2} \\
    DREAM     & 34.6{\scriptsize$\pm$0.6} & 42.2{\scriptsize$\pm$1.5}  & 50.5{\scriptsize$\pm$0.7}
              & 30.8{\scriptsize$\pm$0.6} & 38.4{\scriptsize$\pm$0.3}  & 45.5{\scriptsize$\pm$0.9}
              & 10.1{\scriptsize$\pm$0.4} & 12.0{\scriptsize$\pm$1.0}  & 13.1{\scriptsize$\pm$0.4} \\
    IDM       & 54.8{\scriptsize$\pm$0.4} & 57.1{\scriptsize$\pm$0.3}  & 60.1{\scriptsize$\pm$0.3}
              & 51.9{\scriptsize$\pm$0.7} & 53.3{\scriptsize$\pm$0.6}  & 56.1{\scriptsize$\pm$0.4}
              & –                          & –                          & –                          \\
    DATM      & 57.2{\scriptsize$\pm$0.4} & 60.4{\scriptsize$\pm$0.2}  & 66.7{\scriptsize$\pm$0.6}
              & 41.6{\scriptsize$\pm$0.2} & 43.4{\scriptsize$\pm$0.3}  & 50.3{\scriptsize$\pm$0.2}
              & 28.2{\scriptsize$\pm$0.4} & 34.1{\scriptsize$\pm$0.2}  & 31.6{\scriptsize$\pm$0.1} \\
    LAD       & 58.1{\scriptsize$\pm$0.3} & 63.0{\scriptsize$\pm$1.0}  & 70.5{\scriptsize$\pm$0.4}
              & 54.2{\scriptsize$\pm$1.0} & 59.4{\scriptsize$\pm$0.7}  & 65.8{\scriptsize$\pm$0.2}
              & 31.5{\scriptsize$\pm$0.2} & 37.5{\scriptsize$\pm$0.4}  & 40.0{\scriptsize$\pm$0.1} \\
    RDED      & 47.2{\scriptsize$\pm$0.6} & 55.0{\scriptsize$\pm$1.0} & 63.0{\scriptsize$\pm$1.2} 
              & 43.5{\scriptsize$\pm$0.2} & 46.8{\scriptsize$\pm$0.8} & 52.9{\scriptsize$\pm$1.2} 
              & 40.0{\scriptsize$\pm$0.4}                  & 41.7{\scriptsize$\pm$0.2}                  & 43.4{\scriptsize$\pm$0.1}  \\
    NCFM      & 70.2{\scriptsize$\pm$0.3} & 71.2{\scriptsize$\pm$0.4}  & 75.6{\scriptsize$\pm$0.2}
              & 68.8{\scriptsize$\pm$0.3} & 71.3{\scriptsize$\pm$0.4}  & 72.2{\scriptsize$\pm$0.3}
              & 44.7{\scriptsize$\pm$0.2} & 46.9{\scriptsize$\pm$0.3}  & 48.0{\scriptsize$\pm$0.3} \\

    \rowcolor{cyan!10}\textbf{\mymethodc{} (Ours)}
              & \textbf{71.0{\scriptsize$\pm$0.4}}
              & \textbf{73.9{\scriptsize$\pm$0.1}}
              & \textbf{76.5{\scriptsize$\pm$0.2}}
              & \textbf{69.3{\scriptsize$\pm$0.3}}
              & \textbf{71.6{\scriptsize$\pm$0.2}}
              & \textbf{73.4{\scriptsize$\pm$0.4}}
              & \textbf{45.8{\scriptsize$\pm$0.2}}
              & \textbf{48.1{\scriptsize$\pm$0.3}}
              & \textbf{50.0{\scriptsize$\pm$0.3}} \\
    \midrule
    Full Dataset
              & \multicolumn{3}{c|}{76.2{\scriptsize$\pm$0.3}}
              & \multicolumn{3}{c|}{61.2{\scriptsize$\pm$0.5}}
              & \multicolumn{3}{c}{44.9{\scriptsize$\pm$0.4}} \\
    \bottomrule
    \toprule
     & \multicolumn{6}{c|}{\textbf{CIFAR-10-LT}} & \multicolumn{3}{c}{\textbf{CIFAR-100-LT}} \\
    Imbalance Factor & \multicolumn{3}{c|}{100} & \multicolumn{3}{c|}{200} & \multicolumn{3}{c}{20} \\
    IPC & 10 & 20 & 50 & 10 & 20 & 50 & 10 & 20 & 50 \\
    \midrule
    Random    & 34.4{\scriptsize$\pm$2.0} & 41.4{\scriptsize$\pm$0.7}  & 52.6{\scriptsize$\pm$0.5}
              & 32.5{\scriptsize$\pm$0.8} & 42.2{\scriptsize$\pm$1.1}  & 49.9{\scriptsize$\pm$1.4}
              & 15.0{\scriptsize$\pm$0.3} & 21.6{\scriptsize$\pm$0.5}  & 30.5{\scriptsize$\pm$0.5} \\
    K-Center  & 16.2{\scriptsize$\pm$0.5} & 19.0{\scriptsize$\pm$1.0}  & 24.2{\scriptsize$\pm$1.2}
              & 16.8{\scriptsize$\pm$0.3} & 17.5{\scriptsize$\pm$1.4}  & 22.6{\scriptsize$\pm$1.6}
              & 10.0{\scriptsize$\pm$0.5} & 15.1{\scriptsize$\pm$1.6}  & 23.8{\scriptsize$\pm$0.3} \\
    Graph-Cut & 22.9{\scriptsize$\pm$0.9} & 26.0{\scriptsize$\pm$0.5}  & 33.3{\scriptsize$\pm$1.0}
              & 22.3{\scriptsize$\pm$0.9} & 25.2{\scriptsize$\pm$0.5}  & 29.2{\scriptsize$\pm$0.4}
              & 16.0{\scriptsize$\pm$0.5} & 20.7{\scriptsize$\pm$0.5}  & 28.7{\scriptsize$\pm$0.3} \\
    \midrule
    DC        & 36.7{\scriptsize$\pm$0.8} & 38.1{\scriptsize$\pm$1.0}  & 35.3{\scriptsize$\pm$1.4}
              & 35.6{\scriptsize$\pm$0.8} & 35.7{\scriptsize$\pm$0.9}  & 33.3{\scriptsize$\pm$1.4}
              & 23.2{\scriptsize$\pm$0.3} & 26.2{\scriptsize$\pm$0.3}  & 27.4{\scriptsize$\pm$0.3} \\
    MTT       & 37.7{\scriptsize$\pm$0.6} & 41.6{\scriptsize$\pm$1.1}  & 47.8{\scriptsize$\pm$1.1}
              & –                          & 22.6{\scriptsize$\pm$1.0}  & 23.9{\scriptsize$\pm$0.8}
              & 12.6{\scriptsize$\pm$0.3} & 15.0{\scriptsize$\pm$0.2}  & 10.6{\scriptsize$\pm$0.5} \\
    DREAM     & 30.8{\scriptsize$\pm$1.7} & 34.9{\scriptsize$\pm$0.8}  & 42.2{\scriptsize$\pm$0.8}
              & 32.7{\scriptsize$\pm$1.3} & 32.4{\scriptsize$\pm$0.3}  & 38.9{\scriptsize$\pm$0.4}
              &  9.4{\scriptsize$\pm$0.4} & 10.3{\scriptsize$\pm$0.6}  & 12.3{\scriptsize$\pm$0.3} \\
    IDM       & 49.8{\scriptsize$\pm$0.6} & 50.9{\scriptsize$\pm$0.5}  & 53.1{\scriptsize$\pm$1.4}
              & 47.0{\scriptsize$\pm$0.5} & 48.1{\scriptsize$\pm$1.5}  & 49.9{\scriptsize$\pm$0.3}
              & –                          & –                          & –                          \\
    DATM      & 37.3{\scriptsize$\pm$0.2} & 38.9{\scriptsize$\pm$0.1}  & 44.3{\scriptsize$\pm$0.1}
              & –                          & 34.8{\scriptsize$\pm$0.1}  & 40.1{\scriptsize$\pm$0.2}
              & 25.3{\scriptsize$\pm$0.3} & 27.2{\scriptsize$\pm$0.1}  & 27.1{\scriptsize$\pm$0.1} \\
    LAD       & 53.4{\scriptsize$\pm$0.1} & 58.2{\scriptsize$\pm$0.6}  & 64.0{\scriptsize$\pm$0.9}
              & 52.2{\scriptsize$\pm$0.6} & 56.6{\scriptsize$\pm$0.4}  & 62.3{\scriptsize$\pm$0.3}
              & 31.4{\scriptsize$\pm$0.5} & 35.1{\scriptsize$\pm$1.4}  & 37.0{\scriptsize$\pm$0.7} \\

    RDED      & 39.2{\scriptsize$\pm$0.7} & 43.7{\scriptsize$\pm$0.4} & 47.6{\scriptsize$\pm$0.9} 
              & 36.2{\scriptsize$\pm$0.7} & 40.5{\scriptsize$\pm$0.4} & 42.4{\scriptsize$\pm$1.2} 
              & 36.0{\scriptsize$\pm$0.3}                & 37.0{\scriptsize$\pm$0.3}      &38.1{\scriptsize$\pm$0.1}                       \\
    NCFM      & 60.4{\scriptsize$\pm$0.6} & 60.1{\scriptsize$\pm$0.7}  & 59.3{\scriptsize$\pm$0.9}
              & 57.7{\scriptsize$\pm$1.0} & 57.0{\scriptsize$\pm$1.4}  & 56.1{\scriptsize$\pm$1.3}
              & 42.8{\scriptsize$\pm$0.3} & 44.7{\scriptsize$\pm$0.4}  & 45.6{\scriptsize$\pm$0.3} \\

    \rowcolor{cyan!10}\textbf{\mymethodc{} (Ours)}
              & \textbf{67.4{\scriptsize$\pm$0.3}} & \textbf{69.4{\scriptsize$\pm$0.3}}  & \textbf{71.0{\scriptsize$\pm$0.4}}
              & \textbf{65.3{\scriptsize$\pm$0.3}} & \textbf{66.1{\scriptsize$\pm$0.5}}  & \textbf{66.8{\scriptsize$\pm$0.6}}
              & \textbf{45.6{\scriptsize$\pm$0.3}} & \textbf{48.1{\scriptsize$\pm$1.4}}  & \textbf{46.6{\scriptsize$\pm$0.3}} \\
    \midrule
    Full Dataset
              & \multicolumn{3}{c|}{54.8{\scriptsize$\pm$0.4}}
              & \multicolumn{3}{c|}{48.5{\scriptsize$\pm$0.7}}
              & \multicolumn{3}{c}{39.2{\scriptsize$\pm$0.5}} \\
    \bottomrule
    \end{tabular}
    }
    \vspace{-15pt}
\end{table*}

\subsection{Setup}\label{sec:setup}

\noindent \textbf{Datasets.} Our evaluations were conducted on widely-used datasets: CIFAR-10-LT, CIFAR-100-LT~\citep{krizhevsky2009learning} (32$\times$32), and long-tailed ImageNet subsets~\citep{imagenet_subset} (128$\times$128). Following~\citep{LAD}, these datasets were created by sampling from balanced sets, where the sample size for class $c$, $|\hat{\mathcal{D}}_c|$, follows the exponential decay $|\hat{\mathcal{D}}_c| = |\mathcal{D}_c|\mu^c$ with $\mu^c = \beta^{-(c/C)}$. Here, $C$ is the total number of classes and $\beta = \mathcal{D}_0 / \mathcal{D}_C$ is the imbalance factor~\citep{cui2019class}, where a larger $\beta$ indicates greater imbalance.

\noindent \textbf{Networks.} Following prior studies~\citep{LAD,DATM,drupi}, we used a 3-layer ConvNet with instance normalization for CIFAR-10/100-LT and a 5-layer ConvNet for Long-tailed ImageNet subsets. We did not modify the network models, focusing solely on adjusting the metric design during distillation.

\noindent \textbf{Baseline methods.} We compared \mymethodc{} against several existing methods. Baselines include classical coreset selection techniques like Random sampling, K-Center Greedy~\citep{kcenter} and Graph-Cut~\citep{graphcut}; gradient-matching methods such as DC~\citep{DC} and MTT~\citep{MTT}; distribution-matching methods like IDM~\citep{IDM} and DREAM~\citep{dream}; and recent state-of-the-art methods including DATM~\citep{DATM}, LAD~\citep{LAD}, NCFM~\citep{NCFM} and RDED~\citep{RDED}.

\subsection{Main Results}
We evaluated \mymethodc{} on benchmark datasets across different images-per-class (IPC) and imbalance factor settings.

\noindent \textbf{CIFAR-10-LT.} The results for CIFAR-10-LT are presented in Table~\ref{tab:cifar}. \mymethodc{} outperforms all methods initially designed for balanced datasets, as well as LAD, which was specifically designed for long-tailed scenarios. Notably, our method excels under low IPC and high imbalance factor conditions. For example, with an IPC of 10 and an imbalance factor $\beta = 200$, it shows improvements of 29.7\%, 18.3\%, and 13.1\% over DC, IDM, and LAD, respectively.

\noindent \textbf{CIFAR-100-LT.} On CIFAR-100-LT, the results in Table~\ref{tab:cifar} clearly show that CSDM consistently outperforms all competing methods across all experimental settings. Notably, even under low IPC conditions, its performance is only minimally affected by mild class imbalance, demonstrating a superior level of robustness over alternative methods that do not explicitly handle such imbalance.

\noindent \textbf{Long-tailed ImageNet Subsets.} Results on Long-tailed ImageNet subsets are in Table~\ref{tab:imagenet_subset}. On these high-resolution datasets (ImageNette, ImageWoof, etc.)~\citep{imagenet_subset}, the adverse effects of class imbalance are markedly stronger. Despite this challenge, \mymethodc{} demonstrates superior resilience, recording the smallest absolute accuracy degradation in most cases. For example, on ImageNette with $\beta = 5$, \mymethodc{} increases DM's accuracy from 16.24\% to 29.52\% (a 1.8$\times$ improvement). Similar trends on ImageWoof and ImageSquawk show a clear margin over RDED and DM. This consistent advantage confirms that explicitly modeling the long-tailed distribution is crucial for robustness in high-resolution distillation.

\begin{table*}[tb!]
    \centering
    \vspace{0pt}
    \caption{Results on long-tailed ImageNet Subsets.}
    \vspace{-5pt}
    \label{tab:imagenet_subset}
    \resizebox{0.85\textwidth}{!}{ 
    \begin{tabular}{c|cc|cc|cc|cc}
    \toprule
    Dataset      & \multicolumn{4}{c|}{\textbf{ImageMeow}}               & \multicolumn{4}{c}{\textbf{ImageSquawk}}            \\
    Imbalance Factor    & \multicolumn{2}{c|}{5}       & \multicolumn{2}{c|}{10}      & \multicolumn{2}{c|}{5}       & \multicolumn{2}{c}{10}      \\
    IPC          & 1            & 10           & 1            & 10           & 1            & 10           & 1            & 10           \\
    \midrule
    Random       & 12.8{\scriptsize$\pm$0.6} & 14.5{\scriptsize$\pm$1.0} & 10.2{\scriptsize$\pm$0.7} & 10.3{\scriptsize$\pm$0.6}
                 & 13.1{\scriptsize$\pm$0.9} &  13.2{\scriptsize$\pm$0.8} & 9.6{\scriptsize$\pm$0.6} & 11.7{\scriptsize$\pm$0.8} \\
    DM           &  9.5{\scriptsize$\pm$0.8} & 12.5{\scriptsize$\pm$1.3} &  8.4{\scriptsize$\pm$1.1} & 12.8{\scriptsize$\pm$1.3}
                 & 10.0{\scriptsize$\pm$2.3} & 10.8{\scriptsize$\pm$0.8} & 8.3{\scriptsize$\pm$0.8} &  10.8{\scriptsize$\pm$0.3} \\
    MTT          & 12.3{\scriptsize$\pm$0.9} & 13.2{\scriptsize$\pm$1.0} & 11.1{\scriptsize$\pm$1.0} & 14.1{\scriptsize$\pm$0.5}
                 & 11.1{\scriptsize$\pm$1.4} & 12.5{\scriptsize$\pm$0.8} &  9.2{\scriptsize$\pm$0.7} & 12.2{\scriptsize$\pm$1.1} \\
    RDED         &  9.9{\scriptsize$\pm$0.6} & 13.7{\scriptsize$\pm$0.7} &  9.9{\scriptsize$\pm$1.0} & 12.7{\scriptsize$\pm$0.7}
                 & 10.5{\scriptsize$\pm$1.1} & 11.6{\scriptsize$\pm$0.5} & 7.8{\scriptsize$\pm$0.8} &  10.8{\scriptsize$\pm$0.4} \\
    \rowcolor{cyan!10} \textbf{\mymethodc{} (Ours)} 
                 & \textbf{15.6{\scriptsize$\pm$0.8}} & \textbf{23.2{\scriptsize$\pm$0.9}}
                 & \textbf{12.6{\scriptsize$\pm$0.8}} & \textbf{13.7{\scriptsize$\pm$1.0}}
                 & \textbf{15.6{\scriptsize$\pm$1.0}} & \textbf{18.4{\scriptsize$\pm$0.9}}
                 & \textbf{12.6{\scriptsize$\pm$0.6}} & \textbf{12.7{\scriptsize$\pm$0.9}} \\
    \midrule
    Full Dataset & \multicolumn{2}{c|}{26.4{\scriptsize$\pm$0.7}}     & \multicolumn{2}{c|}{12.4{\scriptsize$\pm$0.8}}
                 & \multicolumn{2}{c|}{15.9{\scriptsize$\pm$0.7}}     & \multicolumn{2}{c}{8.0{\scriptsize$\pm$0.8}}    \\
    \bottomrule

    \toprule
    Dataset      & \multicolumn{4}{c|}{\textbf{ImageWoof}}              & \multicolumn{4}{c}{\textbf{ImageNette}}           \\
    Imbalance Factor   & \multicolumn{2}{c|}{5}       & \multicolumn{2}{c|}{10}      & \multicolumn{2}{c|}{5}       & \multicolumn{2}{c}{10}      \\
    IPC          & 1            & 10           & 1            & 10           & 1            & 10           & 1            & 10           \\
    \midrule
    Random       & 11.9{\scriptsize$\pm$1.1} & 15.7{\scriptsize$\pm$0.5} & 13.1{\scriptsize$\pm$0.8} & 14.1{\scriptsize$\pm$0.9}
                 & 13.5{\scriptsize$        \pm$0.8} & 17.2{\scriptsize$\pm$0.5}  & 15.1{\scriptsize$\pm$0.6}& 19.7{\scriptsize$\pm$0.9} \\
    DM           & 10.0{\scriptsize$\pm$1.3} & 11.1{\scriptsize$\pm$1.0} &  8.3{\scriptsize$\pm$0.8} & 11.1{\scriptsize$\pm$0.5}
                 & 10.6{\scriptsize$\pm$0.5} & 16.2{\scriptsize$\pm$1.2}  & 10.2{\scriptsize$\pm$1.5}& 13.2{\scriptsize$\pm$0.9} \\
    MTT          &  9.9{\scriptsize$\pm$0.9} & 11.2{\scriptsize$\pm$0.5} & 9.8{\scriptsize$\pm$0.6} &  10.4{\scriptsize$\pm$0.9}
                 & 15.4{\scriptsize$\pm$1.5} & 16.2{\scriptsize$\pm$1.4}  & 11.5{\scriptsize$\pm$1.1}& 15.2{\scriptsize$\pm$1.1} \\
    RDED         & 9.3{\scriptsize$\pm$1.5} &  11.5{\scriptsize$\pm$0.4} & 8.8{\scriptsize$\pm$0.7} &  11.1{\scriptsize$\pm$0.6}
                 & 11.5{\scriptsize$\pm$1.3} & 25.2{\scriptsize$\pm$1.9}  & 12.0{\scriptsize$\pm$1.5}& 14.3{\scriptsize$\pm$0.9} \\
    \rowcolor{cyan!10} \textbf{\mymethodc{} (Ours)} 
                 & \textbf{14.4{\scriptsize$\pm$0.7}} & \textbf{17.3{\scriptsize$\pm$1.0}}
                 & \textbf{12.3{\scriptsize$\pm$0.9}} & \textbf{14.1{\scriptsize$\pm$1.2}}
                 & \textbf{24.3{\scriptsize$\pm$0.7}} & \textbf{29.5{\scriptsize$\pm$0.9}}
                 & \textbf{19.8{\scriptsize$\pm$0.6}} & \textbf{24.8{\scriptsize$\pm$0.8}} \\
    \midrule
    Full Dataset & \multicolumn{2}{c|}{15.8{\scriptsize$\pm$0.7}}     & \multicolumn{2}{c|}{12.2{\scriptsize$\pm$0.7}}
                 & \multicolumn{2}{c|}{34.7{\scriptsize$\pm$0.9}}     & \multicolumn{2}{c}{23.3{\scriptsize$\pm$0.8}}   \\
    \bottomrule
    \end{tabular}
    }
\end{table*}

\begin{table*}[t]
\centering
\caption{Performance and resource consumption analysis. (a) Time consumption on CIFAR-10. (b) Memory usage on CIFAR-10. (c) Cross-architecture generalization evaluation.}
\label{tab:combined_results_fixed}
\small

\begin{minipage}[t]{0.31\textwidth}
    \centering    
    \textbf{(a) Time (s/iter) $\downarrow$}\\[1mm]
    \resizebox{\linewidth}{!}{
    \begin{tabular}{c|ccc}
    \toprule
    Dataset & \multicolumn{3}{c}{CIFAR-10} \\
    IPC & 10 & 20 & 50 \\
    \midrule
    DATM & OOM & OOM & OOM \\
    DC & 48 & 107 & OOM \\
    MTT & 0.4 & 0.7 & OOM \\
    \rowcolor{cyan!10}\textbf{CSDM} & \textbf{0.30} & \textbf{0.35} & \textbf{0.33} \\
    \bottomrule
    \end{tabular}
    }
    \end{minipage}\hfill
    \begin{minipage}[t]{0.31\textwidth}
    \centering
    \textbf{(b) Memory (GB) $\downarrow$}\\[1mm]
    \resizebox{\linewidth}{!}{
    \begin{tabular}{c|ccc}
    \toprule
    Dataset & \multicolumn{3}{c}{CIFAR-10} \\
    IPC & 10 & 20 & 50 \\
    \midrule
    DATM & OOM & OOM & OOM \\
    DC & 10.89 & 20.69 & OOM \\
    MTT & 10.84 & 16.91 & OOM \\
    \rowcolor{cyan!10}\textbf{CSDM} & \textbf{3.43} & \textbf{3.44} & \textbf{3.45} \\
    \bottomrule
    \end{tabular}
    }
    \end{minipage}\hfill
    \begin{minipage}[t]{0.35\textwidth}
    \centering
    \textbf{(c) Cross-arch Acc (\%) $\uparrow$}\\[1mm]
    \resizebox{\linewidth}{!}{
    \begin{tabular}{c|cccc}
    \toprule
    Method & Conv & Res18 & VGG & Alex \\
    \midrule
    Random & 52.0 & 49.2 & 47.8 & 47.2 \\
    MTT & 49.1 & 42.9 & 42.3 & 40.0 \\
    DATM & 44.4 & 42.4 & 42.2 & 44.9 \\
    LAD & 64.7 & 60.0 & 55.5 & 53.2 \\
    \rowcolor{cyan!10}\textbf{CSDM} & \textbf{70.1} & \textbf{65.9} & \textbf{69.8} & \textbf{66.4} \\
    \bottomrule
    \end{tabular}
    }
    \end{minipage}

\end{table*}

\noindent \textbf{Cross-Architecture Generalization.} To evaluate the generalization capability of our approach across network architectures, we conducted experiments where the synthetic datasets were distilled using a 3-layer ConvNet and later evaluated on a range of different models, including AlexNet~\citep{krizhevsky2009learning}, VGG-11~\citep{VGG}, and ResNet-18~\citep{resnet}. This setup allows us to assess the robustness of each method when transferred to architectures not seen during distillation. As summarized in Table~\ref{tab:combined_results_fixed}(c), our method consistently delivers superior performance on CIFAR-10 under the challenging setting of 50 images per class and an imbalance factor of $\beta=100$. These results highlight the strong cross-architecture transferability of our approach, which remains effective even when tested on models that differ significantly from the one used for data synthesis.

\subsection{Ablation Study}

\subsubsection{Effect of Kernel Function and Scale Parameter}\label{sec:sampling_scale}
We conduct ablation studies on CIFAR-10-LT to investigate different kernel functions. Unlike the linear kernel, Gaussian and Laplace kernels introduce a scale parameter \(\gamma\) that critically controls emphasis on different frequency components. The choice of \(\gamma\) presents a trade-off: a large \(\gamma\) overemphasizes high-frequency differences, causing overfitting, while a small \(\gamma\) filters them out, leading to underfitting. As shown in Figure~\ref{fig:compare_of_kernel_and_scale_factor}, both extremes degrade performance, aligning with prior work on support vector machines that underscores the importance of proper \(\gamma\) selection~\citep{SVM, lu2015optimal, cavoretto2024bayesian}. To mitigate this sensitivity and improve generalization, we normalize the feature sets \(F_{\mathcal{T}}\) and \(F_{\mathcal{S}}\) to a unified scale. This allows a single parameter, \(\gamma = 2\), to be applied effectively across all datasets, yielding robustly strong results.

\subsubsection{Impact of Class-Aware Amplitude-Phase Coefficient}\label{sec:amplitude_phase}

To evaluate the effect of class-wise balancing between amplitude and phase information, we conduct ablation studies on CIFAR-10-LT by varying the class-specific weighting coefficient $\alpha(c)$, which controls the contribution of amplitude and phase for each class. Prior studies~\citep{mandic2009complex, oppenheim1981importance} suggest that amplitude primarily promotes sample diversity, while phase captures structural fidelity. As shown in Figure~\ref{fig:alpha}, moderately increasing the amplitude weight for head classes and the phase weight for tail classes improves test accuracy. In contrast, reversing this trend leads to performance degradation—indicating that diversity is more critical for head classes, while realism is more important for tail classes.

\subsubsection{Computation and Memory Efficiency}\label{sec:efficiency}
We compare the efficiency of CSDM with state-of-the-art baselines on a single RTX-4090 GPU, as shown in Table~\ref{tab:combined_results_fixed} (a) and (b). Trajectory-matching methods (e.g., MTT, DATM) suffer from high computational costs and frequent Out-of-Memory (OOM) failures at large IPCs due to unrolled optimization graphs. In contrast, CSDM maintains a low memory footprint ($\approx 3.45$ GB) and fast training speed ($\approx 0.33$ s/iter) across all settings, demonstrating superior scalability without the heavy burden of bi-level optimization.

\begin{figure*}[tb!]
    \centering
    \begin{minipage}{0.48\linewidth} 
        \centering
        \includegraphics[width=\linewidth]{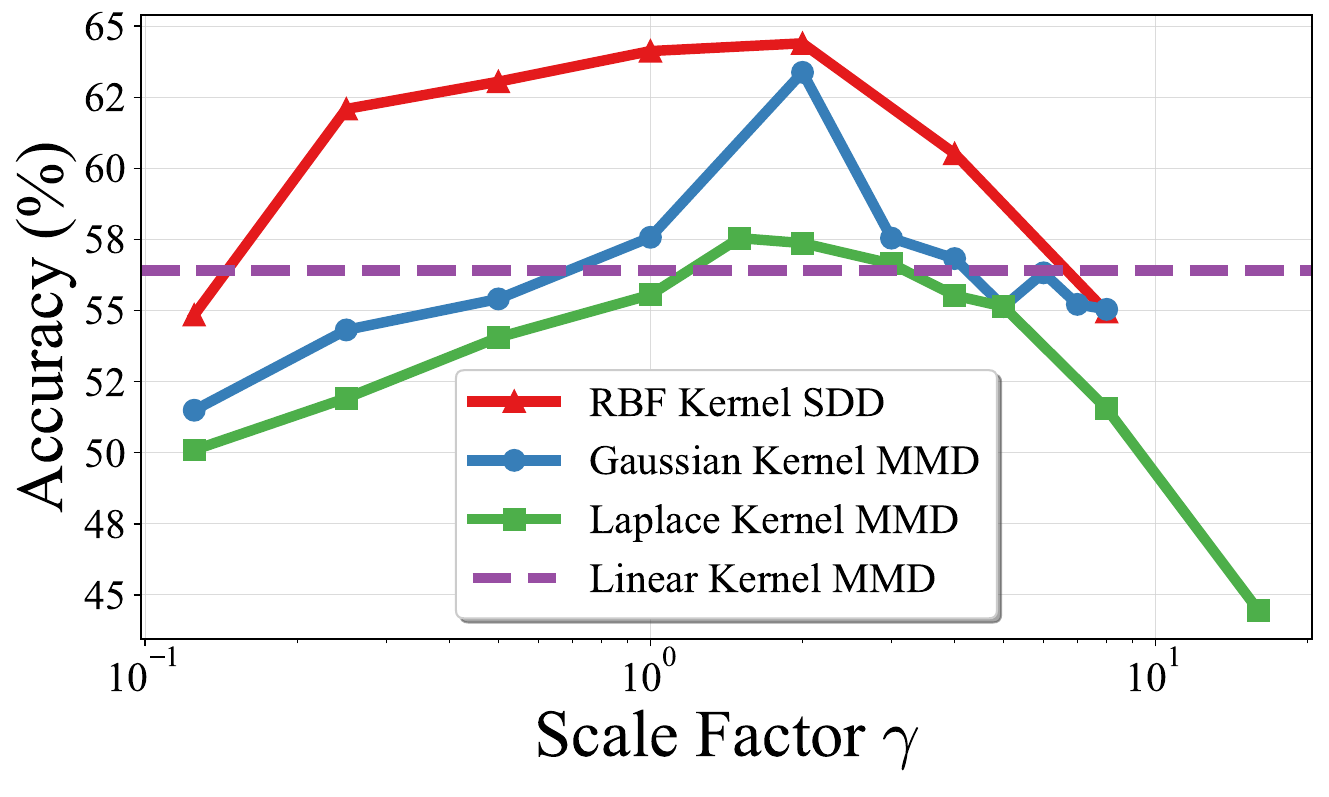}
        \vspace{-20pt}
        \caption{Impact of kernel choice and scale factor on CIFAR-10-LT (IPC=50, imf=200). Results show that under appropriate scale parameters, the universal kernel achieves higher accuracy than the linear kernel, particularly the RBF kernel. The results also demonstrate the importance of scale factors for distillation performance, indicating that SDD allows for more tolerant parameter selection compared to MMD.}
        \label{fig:compare_of_kernel_and_scale_factor} 
    \end{minipage}
    \hfill 
    \begin{minipage}{0.48\linewidth} 
        \centering
        \includegraphics[width=\linewidth]{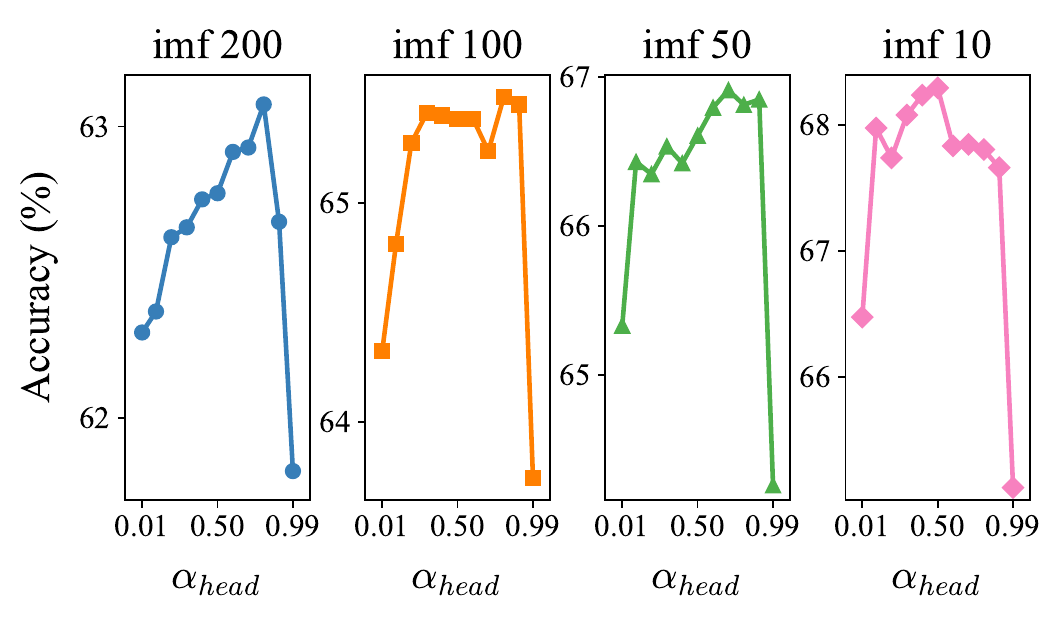}
        \vspace{-20pt}
        \caption{Ablation on the class-aware weighting for amplitude and phase on CIFAR-10-LT (IPC=10). We set the amplitude weight for head classes to $\alpha_{head}$ and for tail classes to $1 - \alpha_{head}$. The results show that optimal performance is achieved by emphasizing diversity (amplitude) for head classes and realism (phase) for tail classes, with a peak around 0.7–0.9 when the imbalance factor (imf) is large.}
        \label{fig:alpha} 
    \end{minipage}
\end{figure*}

\section{Discussion}
\label{sec:discussion_high_order}

Recent dataset distillation (DD) methods increasingly attempt to go beyond
first-order alignment. In this section, we situate \mymethodc{} within this
broader landscape and clarify how our metric-based design relates to three
major lines of “higher-order” techniques. Detailed derivations are provided in
Appendix~\ref{sec:appendix_high_order}.

\noindent \textbf{IID/DSDM: Adding handcrafted higher-order terms on top of linear MMD.}
IID~\citep{IID} and DSDM~\citep{DSDM} start from the standard DM objective
(i.e., linear-kernel MMD) and incorporate additional terms such as covariance
matching, class-centralization losses, or prototype regularizers.
As shown in Theorem~\ref{thm:mean_covariance_matching} (Appendix),
these covariance penalties are mathematically equivalent to matching
\emph{only the first two derivatives of the log characteristic function at
the origin}.  
Thus, although effective in practice, they fundamentally remain \emph{finite-order}
approximations of distributional structure.

In contrast, \mymethodc{} does not append specific moment terms;
instead, the use of a universal kernel implies (Appendix
Theorem~\ref{thm:GaussianMMDMomentExpansion}) that the resulting SDD objective
implicitly matches \emph{all orders of moments}.  
This replaces ad-hoc higher-order penalties by a principled discrepancy that
is distribution-discriminative by construction.

\noindent \textbf{Universal-kernel MMD: Connection to M3D and the importance of kernel scale.}
M3D~\citep{M3D} is the first DD method to explore nontrivial kernels for MMD.
However, its empirical study reports that Gaussian, linear, and polynomial
kernels perform similarly, which seems at odds with the theoretical
universality of Gaussian kernels.  
Our analysis (Fig.~\ref{fig:compare_of_kernel_and_scale_factor}) shows that this
phenomenon arises mainly from the choice of the default scale factor
$\gamma=1$:
for such a scale, the resulting Gaussian kernel behaves nearly linearly in high
dimensions, masking its universal behavior.
Once $\gamma$ is properly tuned, the difference between universal and
non-universal kernels becomes significant.

\noindent \textbf{NCFM: Min–max neural weighting.}
NCFM~\citep{NCFM} learns a sampling distribution over frequencies by maximizing
a characteristic-function discrepancy (CFD) and then minimizing it with respect
to synthetic data. While promising, this min–max formulation introduces two
limitations:
(i) a learned weighting function does not guarantee that the resulting CFD
remains a strict metric; and
(ii) maximizing CFD is not directly aligned with downstream classification,
especially under long-tailed imbalance, as our experiments (Table~\ref{tab:cifar})
show.

Our approach therefore keeps the spectral measure fixed via the kernel's
Bochner representation, ensuring metricity and universality, and introduces
class-aware amplitude--phase reweighting in a \emph{controlled analytic form}.
Proposition~\ref{prop:class_aware_discriminativity} (Appendix) proves that any
$\alpha(c)\in(0,1)$ preserves distribution discriminativity, while allowing the
optimization to adapt to head–tail asymmetry.

\section{Conclusion}
In this work, we developed a principled metric for long-tailed dataset distillation rooted in kernel theory and proposed Class-Aware Spectral Distribution Matching (\mymethodc{}). Through sophisticated kernel design and amplitude-phase decomposition, our method addresses the failure of long-tailed dataset distillation by ensuring a comprehensive alignment of distributions while adaptively balancing the demand for sample diversity in head classes with the need for realism in tail classes. Extensive experiments across diverse datasets demonstrated that our method not only achieves state-of-the-art performance but also exhibits remarkable stability and generalization, showcasing its versatility and effectiveness across datasets with varying types and degrees of class imbalance.

\newpage 

\bibliography{colm2026_conference}

@String(AAAI = {AAAI})

@article{DD,
  title={Dataset distillation},
  author={Wang, Tongzhou and Zhu, Jun-Yan and Torralba, Antonio and Efros, Alexei A},
  journal={arXiv preprint arXiv:1811.10959},
  year={2018}
}

@article{DC,
  title={Dataset condensation with gradient matching},
  author={Zhao, Bo and Mopuri, Konda Reddy and Bilen, Hakan},
  journal={arXiv preprint arXiv:2006.05929},
  year={2020}
}

@inproceedings{DSA,
  title={Dataset condensation with differentiable siamese augmentation},
  author={Zhao, Bo and Bilen, Hakan},
  booktitle={International Conference on Machine Learning},
  pages={12674--12685},
  year={2021},
  organization={PMLR}
}

@inproceedings{DCC,
  title={Dataset condensation with contrastive signals},
  author={Lee, Saehyung and Chun, Sanghyuk and Jung, Sangwon and Yun, Sangdoo and Yoon, Sungroh},
  booktitle={International Conference on Machine Learning},
  pages={12352--12364},
  year={2022},
  organization={PMLR}
}

@inproceedings{MTT,
  title={Dataset distillation by matching training trajectories},
  author={Cazenavette, George and Wang, Tongzhou and Torralba, Antonio and Efros, Alexei A and Zhu, Jun-Yan},
  booktitle={Proceedings of the IEEE/CVF Conference on Computer Vision and Pattern Recognition},
  pages={4750--4759},
  year={2022}
}

@inproceedings{TESLA,
  title={Scaling up dataset distillation to imagenet-1k with constant memory},
  author={Cui, Justin and Wang, Ruochen and Si, Si and Hsieh, Cho-Jui},
  booktitle={International Conference on Machine Learning},
  pages={6565--6590},
  year={2023},
  organization={PMLR}
}

@inproceedings{FTD,
  title={Minimizing the accumulated trajectory error to improve dataset distillation},
  author={Du, Jiawei and Jiang, Yidi and Tan, Vincent YF and Zhou, Joey Tianyi and Li, Haizhou},
  booktitle={Proceedings of the IEEE/CVF Conference on Computer Vision and Pattern Recognition},
  pages={3749--3758},
  year={2023}
}

@article{DATM,
  title={Towards lossless dataset distillation via difficulty-aligned trajectory matching},
  author={Guo, Ziyao and Wang, Kai and Cazenavette, George and Li, Hui and Zhang, Kaipeng and You, Yang},
  journal={arXiv preprint arXiv:2310.05773},
  year={2023}
}

@misc{DM,
      title={Dataset Condensation with Distribution Matching}, 
      author={Bo Zhao and Hakan Bilen},
      year={2022},
      eprint={2110.04181},
      archivePrefix={arXiv},
      primaryClass={cs.LG}
}

@article{VGG,
  title={Very deep convolutional networks for large-scale image recognition},
  author={Simonyan, Karen and Zisserman, Andrew},
  journal={arXiv preprint arXiv:1409.1556},
  year={2014}
}

@inproceedings{IDC,
  title={Dataset condensation via efficient synthetic-data parameterization},
  author={Kim, Jang-Hyun and Kim, Jinuk and Oh, Seong Joon and Yun, Sangdoo and Song, Hwanjun and Jeong, Joonhyun and Ha, Jung-Woo and Song, Hyun Oh},
  booktitle={International Conference on Machine Learning},
  pages={11102--11118},
  year={2022},
  organization={PMLR}
}

@inproceedings{IDM,
  title={Improved distribution matching for dataset condensation},
  author={Zhao, Ganlong and Li, Guanbin and Qin, Yipeng and Yu, Yizhou},
  booktitle={Proceedings of the IEEE/CVF Conference on Computer Vision and Pattern Recognition},
  pages={7856--7865},
  year={2023}
}

@inproceedings{CAFE,
  title={Cafe: Learning to condense dataset by aligning features},
  author={Wang, Kai and Zhao, Bo and Peng, Xiangyu and Zhu, Zheng and Yang, Shuo and Wang, Shuo and Huang, Guan and Bilen, Hakan and Wang, Xinchao and You, Yang},
  booktitle={Proceedings of the IEEE/CVF Conference on Computer Vision and Pattern Recognition},
  pages={12196--12205},
  year={2022}
}

@inproceedings{IID,
  title={Exploiting Inter-sample and Inter-feature Relations in Dataset Distillation},
  author={Deng, Wenxiao and Li, Wenbin and Ding, Tianyu and Wang, Lei and Zhang, Hongguang and Huang, Kuihua and Huo, Jing and Gao, Yang},
  booktitle={Proceedings of the IEEE/CVF Conference on Computer Vision and Pattern Recognition},
  pages={17057--17066},
  year={2024}
}

@inproceedings{M3D,
  title={M3d: Dataset condensation by minimizing maximum mean discrepancy},
  author={Zhang, Hansong and Li, Shikun and Wang, Pengju and Zeng, Dan and Ge, Shiming},
  booktitle={Proceedings of the AAAI Conference on Artificial Intelligence},
  volume={38},
  number={8},
  pages={9314--9322},
  year={2024}
}

@inproceedings{
DSDM,
title={Diversified Semantic Distribution Matching for Dataset Distillation},
author={Hongcheng Li and Yucan Zhou and Xiaoyan Gu and Bo Li and Weiping Wang},
booktitle={ACM Multimedia 2024},
year={2024},
url={https://openreview.net/forum?id=vtpwJob0L1}
}

@inproceedings{DataDAM,
  title={Datadam: Efficient dataset distillation with attention matching},
  author={Sajedi, Ahmad and Khaki, Samir and Amjadian, Ehsan and Liu, Lucy Z and Lawryshyn, Yuri A and Plataniotis, Konstantinos N},
  booktitle={Proceedings of the IEEE/CVF International  Conference on Computer Vision},
  pages={17097--17107},
  year={2023}
}

@article{krizhevsky2009learning,
  title={Learning multiple layers of features from tiny images},
  author={Krizhevsky, Alex and Hinton, Geoffrey and others},
  year={2009},
  publisher={Toronto, ON, Canada},
}

@article{imagenet_subset,
  title={Fastai: a layered API for deep learning},
  author={Howard, Jeremy and Gugger, Sylvain},
  journal={Information},
  volume={11},
  number={2},
  pages={108},
  year={2020},
  publisher={MDPI}
}

@inproceedings{resnet,
  title={Deep residual learning for image recognition},
  author={He, Kaiming and Zhang, Xiangyu and Ren, Shaoqing and Sun, Jian},
  booktitle={Proceedings of the IEEE conference on computer vision and pattern recognition},
  pages={770--778},
  year={2016}
}

@article{oppenheim1981importance,
    title={The importance of phase in signals},
    author={Oppenheim, Alan V and Lim, Jae S},
    journal={Proceedings of the IEEE},
    volume={69},
    number={5},
    pages={529--541},
    year={1981},
    publisher={IEEE}
}

@article{mandic2009complex,
    title={Complex valued nonlinear adaptive filters: state of the art},
    author={Mandic, Danilo P and Constantinides, Anthony G},
    journal={Signal Processing},
    volume={89},
    number={9},
    pages={1704--1725},
    year={2009},
    publisher={Elsevier}
}

@inproceedings{CFGAN,
  title={A characteristic function approach to deep implicit generative modeling},
  author={Ansari, Abdul Fatir and Scarlett, Jonathan and Soh, Harold},
  booktitle={Proceedings of the IEEE/CVF conference on computer vision and pattern recognition},
  pages={7478--7487},
  year={2020}
}

@article{RCFGAN,
  title={Reciprocal adversarial learning via characteristic functions},
  author={Li, Shengxi and Yu, Zeyang and Xiang, Min and Mandic, Danilo},
  journal={Advances in Neural Information Processing Systems},
  volume={33},
  pages={217--228},
  year={2020}
}

@misc{drupi,
      title={DRUPI: Dataset Reduction Using Privileged Information}, 
      author={Shaobo Wang and Yantai Yang and Shuaiyu Zhang and Chenghao Sun and Weiya Li and Xuming Hu and Linfeng Zhang},
      year={2024},
      eprint={2410.01611},
      archivePrefix={arXiv},
      primaryClass={cs.CV},
      url={https://arxiv.org/abs/2410.01611}, 
}

@inproceedings{RDED,
  title={On the diversity and realism of distilled dataset: An efficient dataset distillation paradigm},
  author={Sun, Peng and Shi, Bei and Yu, Daiwei and Lin, Tao},
  booktitle={Proceedings of the IEEE/CVF Conference on Computer Vision and Pattern Recognition},
  pages={9390--9399},
  year={2024}
}

@article{liu2023wasserstein,
  title={Dataset Distillation via the Wasserstein Metric},
  author={Haoyang Liu and Yijiang Li and Tiancheng Xing and Vibhu Dalal and Luwei Li and Jingrui He and Haohan Wang},
  journal={arXiv preprint arXiv:2311.18531},
  year={2023}
}

@inproceedings{NCFM,
      title={Dataset Distillation with Neural Characteristic Function: A Minmax Perspective}, 
      author={Shaobo Wang and Yicun Yang and Zhiyuan Liu and Chenghao Sun and Xuming Hu and Conghui He and Linfeng Zhang},
 booktitle={Proceedings of the IEEE conference on computer vision and pattern recognition},
  year={2025}
}

@article{LAD,
  title={Distilling Long-tailed Datasets},
  author={Zhao, Zhenghao and Wang, Haoxuan and Shang, Yuzhang and Wang, Kai and Yan, Yan},
  journal={arXiv preprint arXiv:2408.14506},
  year={2024}
}

@article{haixiang2017learning,
  title={Learning from class-imbalanced data: Review of methods and applications},
  author={Haixiang, Guo and Yijing, Li and Shang, Jennifer and Mingyun, Gu and Yuanyue, Huang and Bing, Gong},
  journal={Expert systems with applications},
  volume={73},
  pages={220--239},
  year={2017},
  publisher={Elsevier}
}

@article{janowczyk2016deep,
  title={Deep learning for digital pathology image analysis: A comprehensive tutorial with selected use cases},
  author={Janowczyk, Andrew and Madabhushi, Anant},
  journal={Journal of pathology informatics},
  volume={7},
  number={1},
  pages={29},
  year={2016},
  publisher={Elsevier}
}

@inproceedings{peng2020large,
  title={Large-scale object detection in the wild from imbalanced multi-labels},
  author={Peng, Junran and Bu, Xingyuan and Sun, Ming and Zhang, Zhaoxiang and Tan, Tieniu and Yan, Junjie},
  booktitle={Proceedings of the IEEE/CVF conference on computer vision and pattern recognition},
  pages={9709--9718},
  year={2020}
}

@article{buda2018systematic,
  title={A systematic study of the class imbalance problem in convolutional neural networks},
  author={Buda, Mateusz and Maki, Atsuto and Mazurowski, Maciej A},
  journal={Neural networks},
  volume={106},
  pages={249--259},
  year={2018},
  publisher={Elsevier}
}

@inproceedings{chou2020remix,
  title={Remix: rebalanced mixup},
  author={Chou, Hsin-Ping and Chang, Shih-Chieh and Pan, Jia-Yu and Wei, Wei and Juan, Da-Cheng},
  booktitle={European conference on computer vision},
  pages={95--110},
  year={2020},
  organization={Springer}
}

@inproceedings{gidaris2018dynamic,
  title={Dynamic few-shot visual learning without forgetting},
  author={Gidaris, Spyros and Komodakis, Nikos},
  booktitle={Proceedings of the IEEE conference on computer vision and pattern recognition},
  pages={4367--4375},
  year={2018}
}

@article{kang2019decoupling,
  title={Decoupling representation and classifier for long-tailed recognition},
  author={Kang, Bingyi and Xie, Saining and Rohrbach, Marcus and Yan, Zhicheng and Gordo, Albert and Feng, Jiashi and Kalantidis, Yannis},
  journal={arXiv preprint arXiv:1910.09217},
  year={2019}
}

@inproceedings{cui2019class,
  title={Class-balanced loss based on effective number of samples},
  author={Cui, Yin and Jia, Menglin and Lin, Tsung-Yi and Song, Yang and Belongie, Serge},
  booktitle={Proceedings of the IEEE/CVF conference on computer vision and pattern recognition},
  pages={9268--9277},
  year={2019}
}

@article{zhang2023deep,
  title={Deep long-tailed learning: A survey},
  author={Zhang, Yifan and Kang, Bingyi and Hooi, Bryan and Yan, Shuicheng and Feng, Jiashi},
  journal={IEEE transactions on pattern analysis and machine intelligence},
  volume={45},
  number={9},
  pages={10795--10816},
  year={2023},
  publisher={IEEE}
}

@article{gretton2007kernel,
  title={A Kernel Method for the Two-Sample Problem},
  author={Arthur Gretton and Karsten M. Borgwardt and Malte J. Rasch and Bernhard Sch{\"o}lkopf and Alexander Smola},
  journal={arXiv preprint arXiv:0805.2368},
  year={2008}
}

@article{sriperumbudur2010hilbert,
  title={Hilbert Space Embeddings and Metrics on Probability Measures},
  author={B. K. Sriperumbudur and M. G{\"o}tze and M. F{\"u}rnkranz and K. M. Borgwardt and B. Sch{\"o}lkopf},
  journal={Journal of Machine Learning Research},
  volume={11},
  pages={1517--1561},
  year={2010}
}

@book{bochner1932vorlesungen,
  author    = {Salomon Bochner},
  title     = {Vorlesungen über Fouriersche Integrale},
  publisher = {Akademische Verlagsgesellschaft},
  year      = {1932},
  address   = {Leipzig},
  note      = {Reprinted by Chelsea Publishing Co., New York, 1959}
}

@article{steinwart2001influence,
  title={On the influence of the kernel on the consistency of support vector machines},
  author={Steinwart, Ingo},
  journal={Journal of machine learning research},
  volume={2},
  number={Nov},
  pages={67--93},
  year={2001}
}

@article{SVM,
  title={Support vector machines},
  author={Hearst, Marti A. and Dumais, Susan T and Osuna, Edgar and Platt, John and Scholkopf, Bernhard},
  journal={IEEE Intelligent Systems and their applications},
  volume={13},
  number={4},
  pages={18--28},
  year={1998},
  publisher={IEEE}
}

@article{lu2015optimal,
  title={Optimal $\gamma$ and $C$ for $\epsilon$-Support Vector Regression with RBF Kernels},
  author={Lu, Longfei},
  journal={arXiv preprint arXiv:1506.03942},
  year={2015}
}

@article{cavoretto2024bayesian,
  title={Bayesian approach for radial kernel parameter tuning},
  author={Cavoretto, Roberto and De Rossi, Alessandra and Lancellotti, Sandro},
  journal={Journal of Computational and Applied Mathematics},
  volume={441},
  pages={115716},
  year={2024},
  publisher={Elsevier}
}

@book{hammersley2013monte,
  title={Monte carlo methods},
  author={Hammersley, John},
  year={2013},
  publisher={Springer Science \& Business Media}
}

@inproceedings{lee2023decompose,
  title={Decompose, adjust, compose: Effective normalization by playing with frequency for domain generalization},
  author={Lee, Sangrok and Bae, Jongseong and Kim, Ha Young},
  booktitle={Proceedings of the IEEE/CVF conference on computer vision and pattern recognition},
  pages={11776--11785},
  year={2023}
}

@article{kcenter,
  title={Active learning for convolutional neural networks: A core-set approach},
  author={Sener, Ozan and Savarese, Silvio},
  journal={arXiv preprint arXiv:1708.00489},
  year={2017}
}

@inproceedings{graphcut,
  title={Submodular combinatorial information measures with applications in machine learning},
  author={Iyer, Rishabh and Khargoankar, Ninad and Bilmes, Jeff and Asanani, Himanshu},
  booktitle={Algorithmic Learning Theory},
  pages={722--754},
  year={2021},
  organization={PMLR}
}

@inproceedings{dream,
  title={Dream: Efficient dataset distillation by representative matching},
  author={Liu, Yanqing and Gu, Jianyang and Wang, Kai and Zhu, Zheng and Jiang, Wei and You, Yang},
  booktitle={Proceedings of the IEEE/CVF International Conference on Computer Vision},
  pages={17314--17324},
  year={2023}
}

@article{theobald1974generalizations,
  title={Generalizations of mean square error applied to ridge regression},
  author={Theobald, Chris M},
  journal={Journal of the Royal Statistical Society Series B: Statistical Methodology},
  volume={36},
  number={1},
  pages={103--106},
  year={1974},
  publisher={Oxford University Press}
}

@inproceedings{ren2022balanced,
  title={Balanced mse for imbalanced visual regression},
  author={Ren, Jiawei and Zhang, Mingyuan and Yu, Cunjun and Liu, Ziwei},
  booktitle={Proceedings of the IEEE/CVF Conference on Computer Vision and Pattern Recognition},
  pages={7926--7935},
  year={2022}
}

@article{thompson1990mse,
  title={An MSE statistic for comparing forecast accuracy across series},
  author={Thompson, Patrick A},
  journal={International Journal of Forecasting},
  volume={6},
  number={2},
  pages={219--227},
  year={1990},
  publisher={Elsevier}
}

@inproceedings{yang2024neural,
  title={Neural spectral decomposition for dataset distillation},
  author={Yang, Shaolei and Cheng, Shen and Hong, Mingbo and Fan, Haoqiang and Wei, Xing and Liu, Shuaicheng},
  booktitle={European Conference on Computer Vision},
  pages={275--290},
  year={2024},
  organization={Springer}
}

@article{shin2023frequency,
  title={Frequency domain-based dataset distillation},
  author={Shin, Donghyeok and Shin, Seungjae and Moon, Il-Chul},
  journal={Advances in Neural Information Processing Systems},
  volume={36},
  pages={70033--70044},
  year={2023}
}

@article{wang2025videocompressa,
  title={VideoCompressa: Data-Efficient Video Understanding via Joint Temporal Compression and Spatial Reconstruction},
  author={Wang, Shaobo and Niu, Tianle and Yang, Runkang and Liu, Deshan and He, Xu and Wen, Zichen and He, Conghui and Hu, Xuming and Zhang, Linfeng},
  journal={arXiv preprint arXiv:2511.18831},
  year={2025}
}

@article{wang2024not,
  title={Not all samples should be utilized equally: Towards understanding and improving dataset distillation},
  author={Wang, Shaobo and Yang, Yantai and Wang, Qilong and Li, Kaixin and Zhang, Linfeng and Yan, Junchi},
  journal={arXiv preprint arXiv:2408.12483},
  year={2024}
}

@inproceedings{wang2026grounding,
  title={Grounding and Enhancing Informativeness and Utility in Dataset Distillation},
  author={Wang, Shaobo and Yang, Yantai and Chen, Guo and Li, Peiru and Li, Kaixin and Zhou, Yufa and Chen, Zhaorun and Zhang, Linfeng},
  booktitle={The Fourteenth International Conference on Learning Representations}
}

@article{min2025imagebinddc,
  title={Imagebinddc: Compressing multi-modal data with imagebind-based condensation},
  author={Min, Yue and Wang, Shaobo and Li, Jiaze and Niu, Tianle and Fan, Junxin and Miao, Yongliang and Yang, Lijin and Zhang, Linfeng},
  journal={arXiv preprint arXiv:2511.08263},
  year={2025}
}
\bibliographystyle{colm2026_conference}

\newpage
\appendix
\onecolumn
\section{Supplementary Theoretical Analysis: Beyond Moment Matching}
Building on the research trajectory of Distribution Matching in Dataset Distillation, this section aims to systematically clarify several recurrent conceptual confusions and theoretical misinterpretations in the paper. Most Distribution Matching methods adopt the linear kernel MMD introduced by DM~\citep{DM}, which has been erroneously labeled as ``MSE'' by some researchers. Some studies have recognized that linear kernel MMD essentially only performs first-order moment matching and have begun to explicitly incorporate second- and third-order moment constraints~\citep{M3D,IID}. However, since finite-order moment matching is not equivalent to distribution matching~\citep{sriperumbudur2010hilbert}, these attempts have yielded only limited practical improvements. Consequently, a widely accepted view has emerged: ``Moment matching / MMD is flawed, as it can only align low-order moments and fails to truly match distributions.''

Against this backdrop, many researchers have turned to characteristic function-based method~\citep{CFGAN,RCFGAN} and Wasserstein-type distances~\citep{liu2023wasserstein}, leading to the proposal of novel distillation frameworks such as NCFM~\citep{NCFM}.

However, as this section will demonstrate, the issue lies not in the framework of moment matching or MMD per se, but in the fact that most literature has only adopted the linear kernel specialization of MMD. This particular kernel choice offers extremely limited expressive power, which has perpetuated the misconception that ``MMD is equivalent to matching mean statistics.'' In fact, when constructed with universal kernels, MMD not only serves as a strict metric for probability distributions but also naturally translates—via its frequency-domain interpretation—into a spectral integral form of characteristic functions, thereby enabling a complete characterization of the distribution. This point is well-established in the kernel two-sample testing paper~\citep{gretton2007kernel}, yet remains systematically underutilized in the field of Dataset Distillation.

\subsection{MMD and Universal Kernels: From IPM to Distribution Matching}

(A) Two-Sample Problem and IPM Definition

Let $P$ and $Q$ be two probability distributions defined on the input space $\mathcal{X}$, accessible only through samples:
$$
X = \{x_i\}_{i=1}^m \sim P,\quad Y = \{y_j\}_{j=1}^n \sim Q.
$$
We aim to determine whether $P$ and $Q$ are identical without making any assumptions about their distributional forms. This is the classical two-sample testing problem~\citep{gretton2007kernel}. 

A natural approach is to consider a function class $\mathcal{F}$ and define:
$$
\mathrm{MMD}[\mathcal{F},P,Q] = \sup_{f\in\mathcal{F}} \left(\mathbb{E}_{x\sim P}[f(x)] - \mathbb{E}_{y\sim Q}[f(y)]\right),
$$
which belongs to the broad category of Integral Probability Metrics (IPM). If the function class $\mathcal{F}$ is sufficiently rich, then:

If $P=Q$, then clearly $\mathbb{E}_P f = \mathbb{E}_Q f$, and thus $MMD=0$;
If $P \neq Q$, we hope there exists some $f\in\mathcal{F}$ such that their expectations differ, leading to  $MMD>0$ .

Therefore, the design of $\mathcal{F}$ must simultaneously satisfy:

1.  Richness: The function class should be sufficient to distinguish between different distributions;
2.  Regularity: The empirical estimate of MMD should have good convergence properties with finite samples.

The most common choice in modern MMD literature is to let $\mathcal{F}$ be the unit ball in a Reproducing Kernel Hilbert Space (RKHS) $\mathcal{H}$, i.e.,
$$
\mathcal{F} = \{f \in \mathcal{H} : \|f\|_{\mathcal{H}} \le 1\}.
$$

Below we briefly review the relevant concepts and present key theorems.

(B) RKHS and Kernel Mean Embedding

Let $\mathcal{H}$ be a Hilbert space consisting of functions $f:\mathcal{X}\to\mathbb{R}$, with inner product denoted as $\langle\cdot,\cdot\rangle_{\mathcal{H}}$ and norm $\|\cdot\|_{\mathcal{H}}$. If there exists a symmetric positive definite kernel function $k:\mathcal{X}\times\mathcal{X}\to\mathbb{R}$ such that:

1.  For any $x\in\mathcal{X}$, $k(\cdot,x)\in\mathcal{H}$;
2.  For any $f\in\mathcal{H}, x\in\mathcal{X}$,
    $$
    f(x) = \langle f, k(\cdot,x)\rangle_{\mathcal{H}},
    $$
then $\mathcal{H}$ is called the Reproducing Kernel Hilbert Space (RKHS) of kernel $k$. Furthermore, treating
$$
\phi(x) := k(\cdot,x)\in\mathcal{H}
$$
as the feature map, the inner product satisfies
$$
\langle \phi(x),\phi(y)\rangle_{\mathcal{H}} = k(x,y),
$$
which is the so-called kernel trick: we can perform various computations using only kernel function values without explicitly computing $\phi(x)$.

Within this framework, we can define the kernel mean embedding of distributions~\citep{gretton2007kernel}:
\begin{definition}(Kernel Mean Embedding)
Let $X\sim P$ with feature map $\phi:\mathcal{X}\to\mathcal{H}$. If $\mathbb{E}_{x\sim P}\sqrt{k(x,x)}<\infty$, then define
$$\mu_P := \mathbb{E}_{x\sim P}[\phi(x)] \in \mathcal{H}$$
as the mean embedding of distribution $P$ in the RKHS.
\end{definition}
This vector $\mu_P$ can be understood as the ``representative point of the distribution in the feature space'', and MMD is the distance between two mean embeddings $\mu_P,\mu_Q$.

(C) RKHS Form of MMD and Empirical Estimation

Let $\mathcal{F}$ be the unit ball of $\mathcal{H}$, i.e., $\|f\|_{\mathcal{H}}\le 1$. Then we have the following classical result~\citep{gretton2007kernel}.

\begin{theorem}\label{thm:RKHSMMD}
(RKHS Expression of MMD)
Let $\mu_P,\mu_Q$ be the kernel mean embeddings of distributions $P,Q$, respectively. Then
$$\mathrm{MMD}[\mathcal{F},P,Q] = \|\mu_P - \mu_Q\|_{\mathcal{H}}.$$
\end{theorem}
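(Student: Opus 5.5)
The plan is to reduce the supremum defining $\mathrm{MMD}[\mathcal{F},P,Q]$ to an inner product against the single element $\mu_P-\mu_Q$ of $\mathcal{H}$, and then apply Cauchy--Schwarz. This goes in three steps.

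\emph{Step 1: the mean embedding is well defined.} Because $\mathbb{E}_{x\sim P}\sqrt{k(x,x)}<\infty$ and $\|\phi(x)\|_{\mathcal{H}}=\sqrt{k(x,x)}$, the map $x\mapsto\phi(x)$ is Bochner integrable, so $\mu_P\in\mathcal{H}$ exists. A Riesz-representation argument gives the same conclusion: the linear functional $T_P f:=\mathbb{E}_{x\sim P}[f(x)]$ satisfies
\[
|T_P f|\le \mathbb{E}_P|\langle f,\phi(x)\rangle_{\mathcal{H}}|\le \|f\|_{\mathcal{H}}\,\mathbb{E}_P\sqrt{k(x,x)},
\]
so it is bounded and has a representer $\mu_P$. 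The same holds for $Q$. In either construction we obtain the key identity
\[
\mathbb{E}_{x\sim P}[f(x)]=\mathbb{E}_{x\sim P}\langle f,\phi(x)\rangle_{\mathcal{H}}=\langle f,\mu_P\rangle_{\mathcal{H}}\quad\text{for all } f\in\mathcal{H}.
\]
It follows from the reproducing property, by exchanging the expectation with the continuous linear map $\langle f,\cdot\rangle_{\mathcal{H}}$. I expect this step to be the only real obstacle. It needs a careful justification that expectation commutes with the inner product, and I would handle that through the Riesz construction so as to avoid Bochner-integral technicalities.

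\emph{Step 2: rewrite the supremum.} Substituting the identity into the IPM definition gives
\[
\mathrm{MMD}[\mathcal{F},P,Q]=\sup_{\|f\|_{\mathcal{H}}\le 1}\langle f,\mu_P-\mu_Q\rangle_{\mathcal{H}}.
\]

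\emph{Step 3: evaluate the supremum.} Cauchy--Schwarz bounds every term by $\|\mu_P-\mu_Q\|_{\mathcal{H}}$. If $\mu_P\neq\mu_Q$, the bound is attained at $f^\ast=(\mu_P-\mu_Q)/\|\mu_P-\mu_Q\|_{\mathcal{H}}$, which lies in the unit ball. If $\mu_P=\mu_Q$, both sides are $0$. This gives $\mathrm{MMD}[\mathcal{F},P,Q]=\|\mu_P-\mu_Q\|_{\mathcal{H}}$.

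As a by-product of the argument, expanding the squared norm via $\langle\phi(x),\phi(y)\rangle_{\mathcal{H}}=k(x,y)$ yields
\[
\mathrm{MMD}^2=\mathbb{E}\,k(x,x')-2\,\mathbb{E}\,k(x,y)+\mathbb{E}\,k(y,y'),
\]
which is the form used later for empirical estimation and, through Theorem~\ref{thm:Bochner}, for Theorem~\ref{thm:SDD}.
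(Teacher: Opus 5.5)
Your proposal is correct and takes the same route as the paper's proof. You use the reproducing property to get $\mathbb{E}_P[f]=\langle f,\mu_P\rangle_{\mathcal{H}}$, rewrite the supremum as $\sup_{\|f\|_{\mathcal{H}}\le 1}\langle f,\mu_P-\mu_Q\rangle_{\mathcal{H}}$, and finish with Cauchy--Schwarz. You are somewhat more careful than the paper, which neither justifies exchanging expectation and inner product (your Bochner/Riesz argument under $\mathbb{E}_P\sqrt{k(x,x)}<\infty$) nor exhibits the maximizer $f^\ast=(\mu_P-\mu_Q)/\|\mu_P-\mu_Q\|_{\mathcal{H}}$ showing the bound is attained.
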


\begin{proof}
By the reproducing property of the Reproducing Kernel Hilbert Space (RKHS), for any $f\in\mathcal{H}$,
$$
\mathbb{E}_{x\sim P}[f(x)] = \mathbb{E}_{x\sim P}\langle f,\phi(x)\rangle_{\mathcal{H}} = \langle f,\mathbb{E}_{x\sim P}[\phi(x)]\rangle_{\mathcal{H}} = \langle f,\mu_P\rangle_{\mathcal{H}}.
$$
Similarly,
$$
\mathbb{E}_{y\sim Q}[f(y)] = \langle f,\mu_Q\rangle_{\mathcal{H}}.
$$
Therefore,
$$
\mathbb{E}_{x\sim P}[f(x)] - \mathbb{E}_{y\sim Q}[f(y)] = \langle f,\mu_P - \mu_Q\rangle_{\mathcal{H}}.
$$
Under the constraint $\|f\|_{\mathcal{H}}\le 1$, the maximum of this expression is given by the Cauchy--Schwarz inequality:
$$
\sup_{\|f\|_{\mathcal{H}}\le 1} \langle f,\mu_P - \mu_Q\rangle_{\mathcal{H}} = \|\mu_P - \mu_Q\|_{\mathcal{H}}.
$$
This is exactly the definition of $\mathrm{MMD}[\mathcal{F},P,Q]$, thus proving the theorem. 
\end{proof}

\begin{theorem}(Kernel Expectation Form of MMD)\label{thm:KernalMMD}
    Let $\mathcal{H}$ be the Reproducing Kernel Hilbert Space (RKHS) corresponding to kernel $k$. Then
$$
\mathrm{MMD}^2[\mathcal{F},P,Q] = \mathbb{E}_{x,x'\sim P}[k(x,x')]- 2\mathbb{E}_{x\sim P,y\sim Q}[k(x,y)]+ \mathbb{E}_{y,y'\sim Q}[k(y,y')].
$$
\end{theorem}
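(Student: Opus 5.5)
The plan is to start from Theorem~\ref{thm:RKHSMMD}, which gives $\mathrm{MMD}[\mathcal{F},P,Q]=\|\mu_P-\mu_Q\|_{\mathcal{H}}$. Squaring and expanding the Hilbert-space norm by bilinearity of the inner product yields
\[
\mathrm{MMD}^2[\mathcal{F},P,Q]=\langle\mu_P,\mu_P\rangle_{\mathcal{H}}-2\langle\mu_P,\mu_Q\rangle_{\mathcal{H}}+\langle\mu_Q,\mu_Q\rangle_{\mathcal{H}},
\]
where I use that $\mathcal{H}$ is a real Hilbert space, so $\langle\mu_P,\mu_Q\rangle_{\mathcal{H}}=\langle\mu_Q,\mu_P\rangle_{\mathcal{H}}$. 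It then remains to identify each inner product with a kernel expectation.

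For the cross term, I will write $\mu_P=\mathbb{E}_{x\sim P}[\phi(x)]$ and $\mu_Q=\mathbb{E}_{y\sim Q}[\phi(y)]$. I then pull the expectations out of the inner product one at a time. Exactly as in the proof of Theorem~\ref{thm:RKHSMMD}, the map $f\mapsto\langle f,\mu_Q\rangle_{\mathcal{H}}$ is a continuous linear functional. Applying it to the Bochner-integral $\mu_P$ gives $\langle\mu_P,\mu_Q\rangle_{\mathcal{H}}=\mathbb{E}_{x\sim P}\langle\phi(x),\mu_Q\rangle_{\mathcal{H}}$. The reproducing property then gives $\langle\phi(x),\mu_Q\rangle_{\mathcal{H}}=\mu_Q(x)=\mathbb{E}_{y\sim Q}[k(x,y)]$. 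Hence the cross term equals $\mathbb{E}_{x\sim P,\,y\sim Q}[k(x,y)]$ with $x,y$ independent. The same argument, with an independent copy $x'\sim P$ (respectively $y'\sim Q$), gives $\langle\mu_P,\mu_P\rangle_{\mathcal{H}}=\mathbb{E}_{x,x'\sim P}[k(x,x')]$ and the analogous formula for $Q$. Substituting these three identities yields the statement.

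The only genuine obstacle is justifying that expectation and inner product commute, i.e.\ that $\mu_P$ exists as a Bochner integral in $\mathcal{H}$. This follows from the standing assumption in the definition of the kernel mean embedding: $\mathbb{E}_{x\sim P}\|\phi(x)\|_{\mathcal{H}}=\mathbb{E}_{x\sim P}\sqrt{k(x,x)}<\infty$, and likewise for $Q$. The same assumption also makes all the kernel expectations finite, since by Cauchy--Schwarz $|k(x,y)|\le\sqrt{k(x,x)}\sqrt{k(y,y)}$, and under independence the product has finite expectation. Fubini's theorem then justifies evaluating the iterated expectations as joint expectations over the product measures.
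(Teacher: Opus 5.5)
Your proposal is correct and takes essentially the same route as the paper: square the identity $\mathrm{MMD}=\|\mu_P-\mu_Q\|_{\mathcal{H}}$ from Theorem~\ref{thm:RKHSMMD}, expand by bilinearity, and identify each inner product with a kernel expectation by moving the expectations outside the inner product. Your additional justification of that interchange fills in a step the paper performs without comment: Bochner integrability from $\mathbb{E}\sqrt{k(x,x)}<\infty$, finiteness of the kernel expectations via Cauchy--Schwarz and independence, and Fubini.
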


\begin{proof}
    By Theorem \ref{thm:RKHSMMD}, we have
$$
\mathrm{MMD}^2 = \|\mu_P - \mu_Q\|_{\mathcal{H}}^2 = \langle \mu_P,\mu_P\rangle_{\mathcal{H}} + \langle \mu_Q,\mu_Q\rangle_{\mathcal{H}} - 2\langle \mu_P,\mu_Q\rangle_{\mathcal{H}}.
$$
Expanding the first term:
$$
\langle \mu_P,\mu_P\rangle_{\mathcal{H}} = \left\langle \mathbb{E}_{x\sim P}[\phi(x)], \mathbb{E}_{x'\sim P}[\phi(x')] \right\rangle_{\mathcal{H}} = \mathbb{E}_{x,x'\sim P}[\langle \phi(x),\phi(x')\rangle_{\mathcal{H}}] = \mathbb{E}_{x,x'\sim P}[k(x,x')],
$$
Similarly, the second term gives $\mathbb{E}_{y,y'\sim Q}[k(y,y')]$, and the third term:
$$
\langle \mu_P,\mu_Q\rangle_{\mathcal{H}} = \mathbb{E}_{x\sim P,y\sim Q}[\langle \phi(x),\phi(y)\rangle_{\mathcal{H}}] = \mathbb{E}_{x\sim P,y\sim Q}[k(x,y)].
$$
Substituting these expressions yields the desired result.
\end{proof}

In practical computation, we only have access to finite samples ($\{x_i\}\sim P,\{y_j\}\sim Q$). By replacing the expectations with empirical averages, we obtain either unbiased or biased MMD estimators~\citep{gretton2007kernel}. This paper adopts the standard biased estimator form in implementation.

(D) Universal Kernels and Distribution Matching

The above derivation shows that the value of MMD is entirely determined by the mean embeddings $\mu_P,\mu_Q$. A key question then arises: Does the mean embedding uniquely characterize the distribution?

If the mapping $P \mapsto \mu_P$ is injective, i.e., $\mu_P = \mu_Q \Rightarrow P=Q$, then the corresponding kernel is called a characteristic kernel~\citep{gretton2007kernel}. Under certain conditions, universal kernels guarantee this property~\citep{steinwart2001influence}.

We adopt the following classical result:
\newcounter{savetheorem}
\setcounter{savetheorem}{\value{theorem}}
\begin{theorem}\label{thm:universalMMD}
    Let $\mathcal{X}$ be a compact metric space, and $k$ be a universal kernel defined on $\mathcal{X}$, with its corresponding Reproducing Kernel Hilbert Space (RKHS) denoted as $\mathcal{H}$. Let $\mathcal{F}$ be the unit ball of $\mathcal{H}$. Then for any Borel probability measures $P$ and $Q$, we have
$$
\mathrm{MMD}[\mathcal{F},P,Q] = 0
\quad\Longleftrightarrow\quad
P = Q.
$$
\end{theorem}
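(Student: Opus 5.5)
The plan is the classical Gretton et al. argument. It has two directions, and only the reverse one carries content. Universality will be taken in Steinwart's sense: $k$ is continuous on the compact metric space $\mathcal{X}$, and $\mathcal{H}$ is dense in $C(\mathcal{X})$ with respect to the sup norm.

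\emph{Forward direction.} If $P=Q$, then $\mu_P=\mu_Q$. By Theorem~\ref{thm:RKHSMMD}, $\mathrm{MMD}[\mathcal{F},P,Q]=\|\mu_P-\mu_Q\|_{\mathcal{H}}=0$. The mean embeddings are well defined because $k$ is continuous on a compact set, so it is bounded and $\mathbb{E}\sqrt{k(x,x)}<\infty$.

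\emph{Reverse direction.} Suppose $\mathrm{MMD}[\mathcal{F},P,Q]=0$.
\begin{enumerate}
\item \textbf{Equal expectations on $\mathcal{H}$.} The reproducing-property computation in the proof of Theorem~\ref{thm:RKHSMMD} gives, for every $f\in\mathcal{H}$,
\[
\mathbb{E}_P f-\mathbb{E}_Q f=\langle f,\mu_P-\mu_Q\rangle_{\mathcal{H}}=0 .
\]
Scaling shows it suffices to know this for the unit ball; it then extends to all of $\mathcal{H}$.
\item \textbf{Extension to $C(\mathcal{X})$.} Fix $g\in C(\mathcal{X})$ and $\varepsilon>0$. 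By universality choose $f\in\mathcal{H}$ with $\|f-g\|_\infty<\varepsilon$. Then
\[
|\mathbb{E}_P g-\mathbb{E}_Q g|\le |\mathbb{E}_P(g-f)|+|\mathbb{E}_P f-\mathbb{E}_Q f|+|\mathbb{E}_Q(f-g)|<2\varepsilon .
\]
Hence $\mathbb{E}_P g=\mathbb{E}_Q g$ for every bounded continuous $g$.
\item \textbf{Conclusion $P=Q$.} I would use the standard fact (Dudley) that on a metric space a Borel probability measure is determined by its integrals against bounded continuous functions. To keep the proof self-contained, I would prove it directly. For a closed set $C$, the functions $g_n(x)=\max\bigl(0,1-n\,d(x,C)\bigr)$ are continuous and decrease pointwise to $\mathbf{1}_C$. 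Dominated convergence then gives $P(C)=Q(C)$. Closed sets form a $\pi$-system generating the Borel $\sigma$-algebra, so the $\pi$–$\lambda$ theorem yields $P=Q$.
\end{enumerate}

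\emph{Expected obstacle.} The main delicate point is step 3, passing from agreement on continuous functions to equality of measures. The approximation argument with $g_n$ and the $\pi$–$\lambda$ step handle it. A minor technical check is that the interchange $\mathbb{E}\langle f,\phi(x)\rangle=\langle f,\mu_P\rangle$ used in step 1 is valid, i.e.\ Bochner integrability of $\phi$; this holds because $k$ is bounded on compact $\mathcal{X}$.
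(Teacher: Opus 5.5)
Your proposal is correct and follows the paper's proof essentially step for step: $\mathrm{MMD}=0$ gives $\mu_P=\mu_Q$ and hence equal expectations on $\mathcal{H}$; sup-norm density of $\mathcal{H}$ in $C(\mathcal{X})$, together with the $2\varepsilon$ triangle-inequality bound, extends this to all continuous functions; and continuous functions then determine the measure. The only difference is in that last step, where the paper cites the Riesz representation theorem and you instead prove it directly using $g_n(x)=\max(0,1-n\,d(x,C))\downarrow\mathbf{1}_C$, dominated convergence, and the $\pi$--$\lambda$ theorem, which is valid and makes the argument self-contained.
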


\setcounter{theorem}{\value{savetheorem}}
\begin{proof}
    According to the definition of universal kernels~\citep{steinwart2001influence}, the corresponding RKHS $\mathcal{H}$ is dense in $\mathcal{C}(\mathcal{X})$ with respect to the supremum norm. That is, for any $g\in\mathcal{C}(\mathcal{X})$ and any $\varepsilon>0$, there exists $f\in\mathcal{H}$ such that
$$
\sup_{x\in\mathcal{X}} |f(x)-g(x)| < \varepsilon.
$$

By Theorem \ref{thm:RKHSMMD}, we have
$$
\mathrm{MMD}[\mathcal{F},P,Q] = 0
\quad\Longleftrightarrow\quad
\|\mu_P-\mu_Q\|_{\mathcal{H}} = 0,
$$
which implies $\mu_P = \mu_Q$. According to the definition of mean embeddings, we obtain that for all $f\in\mathcal{H}$,
$$
\mathbb{E}_P[f]
= \langle f,\mu_P\rangle_{\mathcal{H}}
= \langle f,\mu_Q\rangle_{\mathcal{H}}
= \mathbb{E}_Q[f].
$$

Let $g\in\mathcal{C}(\mathcal{X})$ be arbitrary and take $\varepsilon>0$. By density, there exists $f\in\mathcal{H}$ satisfying
$$
\sup_{x\in\mathcal{X}} \|f(x)-g(x)\| < \varepsilon.
$$
Then
$$
\begin{aligned}
\left\|\mathbb{E}_P[g]-\mathbb{E}_Q[g]\right\|
&\le \left\|\mathbb{E}_P[g-f]\right\|
+ \left\|\mathbb{E}_P[f]-\mathbb{E}_Q[f]\right\|
+ \left\|\mathbb{E}_Q[f-g]\right\| \\
&\le \mathbb{E}_P\|g-f\|
+ 0
+ \mathbb{E}_Q\|f-g\| \\
&\le 2\varepsilon,
\end{aligned}
$$
where the second term is zero because $f\in\mathcal{H}$ and for all $f\in\mathcal{H}$ we already have $\mathbb{E}_P[f] = \mathbb{E}_Q[f]$. Letting $\varepsilon\to 0$ yields
$$
\mathbb{E}_P[g] = \mathbb{E}_Q[g], \qquad \forall\, g\in\mathcal{C}(\mathcal{X}).
$$

On a compact metric space, a Borel probability measure is uniquely determined by its integrals against continuous functions (this can be derived from the Riesz representation theorem or representation theorems for positive linear functionals in measure theory). Therefore, if for all $g\in\mathcal{C}(\mathcal{X})$ we have
$$
\int g\,\mathrm{d}P = \int g\,\mathrm{d}Q,
$$
then it must be that $P = Q$.

In summary, $\mathrm{MMD}[\mathcal{F},P,Q] = 0$ implies $P = Q$; the converse implication $P = Q \Rightarrow \mathrm{MMD} = 0$ follows directly from the definition. This completes the proof.
\end{proof}

Below, we provide an explicit expansion of the Gaussian kernel to intuitively demonstrate how MMD implicitly matches all orders of moments.

\subsection{Moment Expansion of Gaussian Kernel MMD: From ``Finite-Order'' to ``All-Order''} 

To more intuitively understand how ``universal kernel MMD implicitly includes all orders of moments'', we take the most commonly used Gaussian kernel as an example and provide a detailed derivation of the moment expansion. Technically, this expansion can be viewed as an infinite-order generalization of high-order polynomial kernels.

Consider the Gaussian kernel on a $d$-dimensional input space:
$$
k(x,y) = \exp\Big(-\frac{\|x-y\|^2}{2\sigma^2}\Big), \quad x,y\in\mathbb{R}^d.
$$
We rewrite it as a product of three terms:
$$
k(x,y) = \exp\Big(-\frac{\|x\|^2}{2\sigma^2}\Big) \exp\Big(-\frac{\|y\|^2}{2\sigma^2}\Big) \exp\Big(\frac{x^\top y}{\sigma^2}\Big).
$$
Applying power series expansion to the third factor gives:
$$
\exp\Big(\frac{x^\top y}{\sigma^2}\Big) = \sum_{n=0}^\infty \frac{1}{n!\sigma^{2n}} (x^\top y)^n.
$$
Further expand $(x^\top y)^n$ in multi-index notation. Let $\alpha=(\alpha_1,\dots,\alpha_d)\in\mathbb{N}^d$, $|\alpha|=\sum_{i=1}^d \alpha_i$, $\alpha!=\prod_{i=1}^d \alpha_i!$, $x^\alpha = \prod_{i=1}^d x_i^{\alpha_i}$. Then
$$
(x^\top y)^n = \sum_{|\alpha|=n} \frac{n!}{\alpha!} x^\alpha y^\alpha.
$$
Substituting back into the power series gives:
$$
\begin{aligned}
\exp\Big(\frac{x^\top y}{\sigma^2}\Big) &= \sum_{n=0}^\infty \frac{1}{n!\sigma^{2n}} \sum_{|\alpha|=n} \frac{n!}{\alpha!} x^\alpha y^\alpha \\ 
&= \sum_{n=0}^\infty \sum_{|\alpha|=n} \frac{1}{\alpha!\sigma^{2|\alpha|}} x^\alpha y^\alpha \\ 
&= \sum_{\alpha\in\mathbb{N}^d} \frac{1}{\alpha!\sigma^{2|\alpha|}} x^\alpha y^\alpha.
\end{aligned}
$$
Thus, the Gaussian kernel can be written as:
$$
k(x,y) = \exp\Big(-\frac{\|x\|^2}{2\sigma^2}\Big) \exp\Big(-\frac{\|y\|^2}{2\sigma^2}\Big) \sum_{\alpha\in\mathbb{N}^d} \frac{1}{\alpha!\sigma^{2|\alpha|}} x^\alpha y^\alpha.
$$

This decomposition leads directly to the following precise characterization of the Gaussian kernel MMD in terms of weighted moments.

\begin{theorem}\label{thm:GaussianMMDMomentExpansion}
Let $k(x,y) = \exp\big(-\frac{\|x-y\|^2}{2\sigma^2}\big)$ be the Gaussian kernel on $\mathbb{R}^d$, and let $P$ and $Q$ be two probability distributions on $\mathbb{R}^d$ such that all moments exist. Define the weighted moments
$$
m_{P,\alpha} := \mathbb{E}_{X\sim P} \Big[X^\alpha \exp\Big(-\frac{\|X\|^2}{2\sigma^2}\Big)\Big], \quad 
m_{Q,\alpha} := \mathbb{E}_{Y\sim Q} \Big[Y^\alpha \exp\Big(-\frac{\|Y\|^2}{2\sigma^2}\Big)\Big],
$$
for all multi-indices $\alpha \in \mathbb{N}^d$, where $X^\alpha = \prod_{i=1}^d X_i^{\alpha_i}$, $|\alpha| = \sum_{i=1}^d \alpha_i$, and $\alpha! = \prod_{i=1}^d \alpha_i!$. Then the squared Maximum Mean Discrepancy (MMD) between $P$ and $Q$ induced by the Gaussian kernel admits the moment expansion:
$$
\mathrm{MMD}^2(P,Q) = \sum_{\alpha\in\mathbb{N}^d} w_\alpha \big(m_{P,\alpha} - m_{Q,\alpha}\big)^2,
$$
where the weights are given by
$$
w_\alpha = \frac{1}{\alpha!\sigma^{2|\alpha|}} > 0.
$$
Consequently:
\begin{itemize}[leftmargin=10pt, topsep=0pt, itemsep=1pt, partopsep=1pt, parsep=1pt]
    \item If $\mathrm{MMD}^2(P,Q) = 0$, then $m_{P,\alpha} = m_{Q,\alpha}$ for all $\alpha \in \mathbb{N}^d$;
    \item If there exists any $\alpha$ such that $m_{P,\alpha} \ne m_{Q,\alpha}$, then $\mathrm{MMD}^2(P,Q) > 0$.
\end{itemize}
Moreover, the collection of weighted moments $\{m_{P,\alpha}\}_{\alpha \in \mathbb{N}^d}$ uniquely determines the full sequence of ordinary moments $\{\mathbb{E}_P[X^\beta]\}_{\beta \in \mathbb{N}^d}$, and vice versa. Therefore, under standard moment determinacy conditions, $\mathrm{MMD}^2(P,Q) = 0$ implies $P = Q$.
\end{theorem}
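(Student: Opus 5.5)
Our plan is to substitute the feature expansion of the Gaussian kernel derived just above into the kernel expectation form of Theorem~\ref{thm:KernalMMD}. Write $\psi_\alpha(x) := x^\alpha \exp(-\|x\|^2/(2\sigma^2))$. The displayed decomposition then reads $k(x,y) = \sum_{\alpha} w_\alpha \psi_\alpha(x)\psi_\alpha(y)$ with $w_\alpha = 1/(\alpha!\sigma^{2|\alpha|})$. Substituting into each of the three expectations gives, at least formally,
\[
\mathbb{E}_{x,x'\sim P}[k(x,x')] = \sum_\alpha w_\alpha m_{P,\alpha}^2, \qquad \mathbb{E}_{x\sim P,y\sim Q}[k(x,y)] = \sum_\alpha w_\alpha m_{P,\alpha} m_{Q,\alpha},
\]
and similarly for $Q$. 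Here we use independence of $x$ and $x'$ to factor $\mathbb{E}[\psi_\alpha(x)\psi_\alpha(x')] = m_{P,\alpha}^2$. Collecting the three terms gives $\sum_\alpha w_\alpha (m_{P,\alpha}-m_{Q,\alpha})^2$.

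The step that needs care is the interchange of expectation and the infinite sum. We would justify it by Fubini--Tonelli, by showing $\sum_\alpha w_\alpha |\psi_\alpha(x)||\psi_\alpha(y)|$ is integrable. Note that $\sum_\alpha w_\alpha |x^\alpha||y^\alpha| = \exp(|x|^\top|y|/\sigma^2)$, with absolute values taken coordinatewise, and $|x|^\top|y| \le \|x\|\|y\| \le (\|x\|^2+\|y\|^2)/2$. Hence the absolute series is bounded by $\exp(-\|x\|^2/(2\sigma^2))\exp(-\|y\|^2/(2\sigma^2))\exp((\|x\|^2+\|y\|^2)/(2\sigma^2)) = 1$. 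This dominating bound is uniform, so no moment assumption is even needed for the interchange. The same bound shows each $m_{P,\alpha}$ is finite, since $|\psi_\alpha|$ is bounded.

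The two bulleted consequences follow at once because every $w_\alpha>0$ and the series has nonnegative terms. If the sum is zero, every term vanishes; if some term is nonzero, the sum is positive.

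For the ``moreover'' clause we would argue as follows. The weighted moments are the moments of the finite measure $\tilde P(dx) = e^{-\|x\|^2/(2\sigma^2)}P(dx)$. This measure is mutually absolutely continuous with $P$, with a strictly positive density, so $\tilde P$ determines $P$ and conversely. The measure $\tilde P$ has Gaussian-type tails relative to $P$, and in fact all its moments satisfy $|m_{P,\alpha}| \le \sup_x |x^\alpha| e^{-\|x\|^2/(2\sigma^2)} \le C^{|\alpha|}\sqrt{\alpha!}$-type bounds. Such growth satisfies a Carleman condition, so $\tilde P$ is moment-determinate. Equality of all $m_{P,\alpha}$ therefore forces $\tilde P = \tilde Q$, hence $P = Q$.

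This step is the main obstacle, because the literal claim that weighted moments determine ordinary moments ``and vice versa'' only holds in the sense of going through the measure. We would therefore phrase it as: $\{m_{P,\alpha}\}$ determines $\tilde P$, hence $P$, hence all ordinary moments. The converse direction is immediate because $P$ determines $\tilde P$. The final conclusion that $\mathrm{MMD}^2 = 0$ implies $P=Q$ then holds, with the determinacy supplied by Carleman's condition for $\tilde P$. Alternatively, it follows from Theorem~\ref{thm:universalMMD} on compact supports.
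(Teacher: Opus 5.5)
Your first step matches the paper's: substitute $k(x,y)=\sum_\alpha w_\alpha\psi_\alpha(x)\psi_\alpha(y)$ into the kernel-expectation form and swap sum and expectation. The paper only asserts that the swap is ``justified under mild integrability assumptions''. Your bound $\sum_\alpha w_\alpha|\psi_\alpha(x)||\psi_\alpha(y)|\le 1$ actually proves it, and shows that no moment hypothesis is needed.

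For the ``moreover'' part your route is genuinely different. The paper Taylor-expands the Gaussian weight, writes $m_{P,\alpha}=\sum_\gamma c_\gamma\,\mathbb{E}_P[X^{\alpha+2\gamma}]$, and appeals to recursively inverting an ``upper-triangular'' system. That argument is heuristic for two reasons. First, the interchange needs roughly $\mathbb{E}_P\bigl[|X^\alpha|e^{\|X\|^2/(2\sigma^2)}\bigr]<\infty$; for $d=1$, $\alpha=0$ and $P=\mathcal{N}(0,s^2)$ with $s^2>\sigma^2$, the series already diverges. Second, every equation in the infinite triangular system involves infinitely many higher-order unknowns, so a recursion has no base case and uniqueness is not actually shown.

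Your tilted-measure argument is rigorous and stronger. $\tilde P$ has exponential moments of every order, hence is moment-determinate, and $d\tilde P/dP=e^{-\|x\|^2/(2\sigma^2)}$ is strictly positive and finite. This yields $\mathrm{MMD}^2(P,Q)=0\Rightarrow P=Q$ with no determinacy assumption on $P$ or $Q$, as one expects since the Gaussian kernel is characteristic on $\mathbb{R}^d$. For $d>1$, invoke a multivariate criterion, e.g.\ $\int e^{c\|x\|}\,d\tilde P<\infty$ or Nussbaum's coordinatewise Carleman condition; both hold here.

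One point is not right as written. The ``vice versa'' clause says the \emph{ordinary moments} determine the weighted moments. Your justification, ``$P$ determines $\tilde P$'', does not give this, because the ordinary moments determine $P$ only when $P$ is moment-determinate. Your own machinery shows the clause is false in general. Take $P\neq P'$ with identical moments, e.g.\ the lognormal law and its perturbation with density $f(x)\bigl(1+\varepsilon\sin(2\pi\log x)\bigr)$, $0<|\varepsilon|\le 1$. Then $\tilde P\neq\tilde P'$, both are determinate, so some weighted moment must differ. State that direction under a determinacy hypothesis on $P$, or flag that it fails otherwise, rather than calling it immediate.
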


\begin{proof}
Define:
$$
m_{P,\alpha} := \mathbb{E}_{X\sim P} \Big[X^\alpha \exp\Big(-\frac{\|X\|^2}{2\sigma^2}\Big)\Big], \quad 
m_{Q,\alpha} := \mathbb{E}_{Y\sim Q} \Big[Y^\alpha \exp\Big(-\frac{\|Y\|^2}{2\sigma^2}\Big)\Big].
$$
Using the kernel expectation form in Theorem~\ref{thm:RKHSMMD} and substituting the above kernel expansion into $\mathrm{MMD}^2(P,Q)$, we can obtain by exchanging summation and expectation (justified under mild integrability assumptions):
$$
\mathrm{MMD}^2(P,Q) = \sum_{\alpha\in\mathbb{N}^d} w_\alpha \big(m_{P,\alpha} - m_{Q,\alpha}\big)^2,
$$
where
$$
w_\alpha = \frac{1}{\alpha!\sigma^{2|\alpha|}} > 0.
$$
That is, the Gaussian kernel MMD² can be viewed as a weighted sum of squared differences of all exponentially weighted higher-order moments. This intuitively shows:

\begin{itemize}[leftmargin=10pt, topsep=0pt, itemsep=1pt, partopsep=1pt, parsep=1pt]
    \item If $\mathrm{MMD}^2=0$, then for all $\alpha$, $m_{P,\alpha} = m_{Q,\alpha}$;
    \item If there exists some moment that differs in the weighted sense, then $\mathrm{MMD}^2>0$.
\end{itemize}

The above $m_{P,\alpha}$ is a weighted moment, i.e.,
$$
m_{P,\alpha} = \mathbb{E}\big[X^\alpha e^{-\frac{\|X\|^2}{2\sigma^2}}\big].
$$
By Taylor expanding the exponential weight, we can prove that the weighted moments $\{m_{P,\alpha}\}$ and the original moment sequence $\{\mathbb{E}[X^\beta]\}$ mutually determine each other. The brief steps are as follows:
Expand the exponential weight:
$$
e^{-\frac{\|x\|^2}{2\sigma^2}} = \sum_{\gamma\in\mathbb{N}^d} c_\gamma x^{2\gamma},\quad c_\gamma = \frac{(-1)^{|\gamma|}}{\gamma!(2\sigma^2)^{|\gamma|}}.
$$

Substitute back into the weighted moment:
$$
m_{P,\alpha} = \sum_{\gamma} c_\gamma \mathbb{E}[X^{\alpha+2\gamma}].
$$
For each fixed $\alpha$, the right-hand side is a linear combination of all original moments, where the lowest-order term corresponds to $\gamma=0$ with coefficient 1, and the rest are higher-order moments.

This forms an ``upper triangular'' linear system with respect to the original moments: lower-order $m_{P,\alpha}$ only involve the current and higher-order original moments. Conversely, using a recursive method, we can uniquely recover $\{\mathbb{E}[X^\beta]\}$ from $\{m_{P,\alpha}\}$.

Combining with the previous conclusion that ``MMD² is a weighted sum of squared differences of weighted moments'', we obtain: when the Gaussian kernel MMD is zero, all weighted moments are identical, and thus all original moments are identical; under suitable conditions, the full set of moments of a distribution uniquely characterizes the distribution itself, therefore $P=Q$.

This is equivalent to the proof based on the universality of the kernel in Theorem~A.4, but provides a more computationally intuitive explanation from the perspective of ``moment expansion''.
\end{proof}

In summary, the Gaussian kernel MMD does not merely compare means or variances—it implicitly aggregates discrepancies across \emph{all} orders of moments, each weighted by a positive coefficient that decays with the order. This infinite-order sensitivity underlies its ability to distinguish any two distinct distributions (under mild regularity), offering a concrete moment-based interpretation of kernel universality.

\subsection{SDD versus CFD: why we still adopt a kernel-based perspective}
In this section we clarify the relation between the proposed Spectral Distribution Distance (SDD) and the characteristic-function-based distances used in prior work, and explain why we still insist on a kernel–MMD perspective.
SDD is defined as
$$\mathrm{SDD}(P, Q)
:= \int_{\mathbb{R}^d}
\bigl|\varphi_{P}(t) - \varphi_{Q}(t)\bigr|^2 \, d\mu(t)$$,where $\mu$ is the spectral measure of a bounded, shift-invariant kernel $k$. In the general probability and generative modeling literature, distances of the form
$$\mathrm{CFD}(P, Q)
= \int_{\mathbb{R}^d}
\bigl|\varphi_P(t) - \varphi_Q(t)\bigr| \, \omega(t) \, dt$$
are often referred to as Characteristic Function Distances (CFD), with $\omega(t)$ being an arbitrary weight function. Representative examples include characteristic-function-based GAN objectives and reciprocal adversarial learning, as well as recent dataset distillation methods such as NCFM, where $\omega(t)$ is implicitly realized by a trainable neural sampler in a min–max game. Although the notational conventions differ, these approaches are all instances of CFD-type objectives.

From the viewpoint of functional analysis, SDD and CFD are not identical: SDD corresponds to an $L^2$-type norm on the discrepancy of characteristic functions, whereas CFD corresponds to an $L^1$-type norm with possibly different weighting. To make this relation precise, it is convenient to bind CFD to the same spectral measure $\mu$ induced by the kernel and to consider
$$\mathrm{CFD}_\mu(P, Q)
:= \int_{\mathbb{R}^d}
\bigl|\varphi_P(t) - \varphi_Q(t)\bigr| \, d\mu(t)$$.
Let $\Delta\varphi(t) = \varphi_P(t) - \varphi_Q(t)$. Since characteristic functions satisfy $|\varphi_P(t)| \le 1$ for all $t$, we have $|\Delta\varphi(t)| \le 2$. Moreover, the boundedness of the kernel implies that the spectral measure is finite, $\mu(\mathbb{R}^d) = k(0) < \infty$. Under these two mild conditions we obtain the following inequalities:
$$\mathrm{CFD}_\mu(P, Q)
= \int |\Delta\varphi(t)| \, d\mu(t)
\le \sqrt{\mu(\mathbb{R}^d)}
\left( \int |\Delta\varphi(t)|^2 \, d\mu(t) \right)^{1/2}
= c_1 \, \mathrm{SDD}(P, Q)^{1/2},$$
and, using $|\Delta\varphi(t)|^2 \le 2 |\Delta\varphi(t)|$,
$$\mathrm{SDD}(P, Q)
= \int |\Delta\varphi(t)|^2 \, d\mu(t)
\le 2 \int |\Delta\varphi(t)| \, d\mu(t)
= 2 \, \mathrm{CFD}_\mu(P, Q).$$
Here the constant $c_1 = \sqrt{\mu(\mathbb{R}^d)}$ depends only on the kernel. As a consequence, for any sequence of probability distributions $\{P_n\}$,
$$\mathrm{SDD}(P_n, P) \to 0
\quad\Longleftrightarrow\quad
\mathrm{CFD}_\mu(P_n, P) \to 0$$
In other words, SDD and $\mathrm{CFD}_\mu$ induce the same notion of convergence on the space of probability measures: they are equivalent as metrics in the topological sense. At the same time, they are still different norms on the underlying function space ($L^2(\mu)$ vs. $L^1(\mu)$), and there is generally no global constant $C > 0$ such that $\|\Delta\varphi\|_{L^2(\mu)} \le C \|\Delta\varphi\|_{L^1(\mu)}$ holds for all square-integrable functions. For this reason we avoid calling SDD and CFD “equivalent norms” in a strict functional-analytic sense; instead, we emphasize that they vanish simultaneously and therefore play a similar role as discrepancy measures between distributions.

The special role of SDD in this work comes from the choice of the measure $d\mu(t)$ rather than from the mere use of characteristic functions. By starting from a universal, shift-invariant kernel and invoking Bochner’s theorem, SDD is exactly the squared MMD in the corresponding RKHS expressed in the spectral domain. This one-to-one correspondence immediately transfers all known theoretical guarantees of MMD with universal kernels—such as metricity and characteristicness—to SDD, without having to re-verify whether a given weight function $\omega(t)$ yields a valid distance. In contrast, a generic CFD with arbitrary $\omega(t)$ may or may not define a discriminative metric, and additional conditions on $\omega$ are required to ensure identifiability.

It is also important to distinguish this principled kernel-based construction from how CFD has been used in dataset distillation so far. NCFM is, to our knowledge, the only prior method that explicitly incorporates CFD into the DD objective by learning a neural sampler over frequencies in a min–max fashion. While this design works well on balanced CIFAR-10, our experiments show that its performance degrades markedly as the imbalance factor increases, indicating that the learned sampling distribution is not robust to long-tailed data. Other recent frequency-domain distillation methods such as FreD and NSD also operate in the spectral domain, but they focus on heuristic frequency selection or decomposition at the image level and do not provide a kernel–MMD interpretation or a theoretical characterization of the resulting discrepancy.

From this perspective, SDD should be viewed not as a new distance per se, but as a carefully instantiated special case of the classical CFD family that is (i) anchored in universal kernel theory, (ii) computationally amenable via Monte Carlo sampling from a known spectral distribution, and (iii) explicitly adapted to the long-tailed dataset distillation objective through our class-aware amplitude–phase decomposition. We believe that this principled construction provides a solid starting point for further research on characteristic-function-based metrics in DD, and could also serve as a theoretical backbone for designing more sophisticated adaptive neural frequency samplers in the future.

\begin{proposition}[Class-aware discriminativity]\label{prop:class_aware_discriminativity}
Let $k(x,y)=k(x-y)$ be a bounded, shift-invariant, universal kernel on $\mathbb{R}^d$ with spectral measure $\mu$ as in Bochner’s theorem.
For each class $c$, let $F_T^c$ and $F_S^c$ denote the real and synthetic feature distributions of class $c$, with characteristic functions
$\varphi_T^c(t)$ and $\varphi_S^c(t)$, written in polar form as
\[
\varphi_T^c(t) = |\varphi_T^c(t)| e^{\mathrm{i}\theta_T^c(t)}, \quad
\varphi_S^c(t) = |\varphi_S^c(t)| e^{\mathrm{i}\theta_S^c(t)}.
\]

Given class-specific coefficients $\alpha(c)\in(0,1)$, define the class-wise discrepancy
\[
d_c(F_T^c, F_S^c)
:= \int_{\mathbb{R}^d}
\Big[
\alpha(c)\,\big||\varphi_T^c(t)| - |\varphi_S^c(t)|\big|^2
+ 2(1-\alpha(c))\,|\varphi_T^c(t)|\,|\varphi_S^c(t)|\big(1-\cos(\theta_T^c(t)-\theta_S^c(t))\big)
\Big]\; \mathrm{d}\mu(t),
\]
and the overall distance
\[
d(F_T,F_S) := \sum_c d_c(F_T^c, F_S^c).
\]

Assume that the class prior over labels is the same for real and synthetic data (as enforced by our distillation protocol).
Then $d(F_T,F_S) = 0$ if and only if the real and synthetic feature distributions coincide, i.e.\ $P_T = P_S$.
\end{proposition}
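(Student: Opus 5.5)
The plan is to reduce the statement, class by class, to Theorem~\ref{thm:SDD} together with Theorem~\ref{thm:universalMMD}, using only that both terms in the integrand of $d_c$ are nonnegative. Fix a class $c$ and a frequency $t$, and write $a=|\varphi_T^c(t)|$, $b=|\varphi_S^c(t)|$, $\Delta\theta=\theta_T^c(t)-\theta_S^c(t)$. The amplitude-phase identity from the class-aware decomposition gives
\[
|\varphi_T^c(t)-\varphi_S^c(t)|^2=(a-b)^2+2ab\,(1-\cos\Delta\theta).
\]
Both summands are nonnegative. With $\alpha(c)\in(0,1)$ and $m_c:=\min\{\alpha(c),1-\alpha(c)\}>0$, the weighted integrand is therefore bounded below by $m_c\,|\varphi_T^c(t)-\varphi_S^c(t)|^2$ and above by $|\varphi_T^c(t)-\varphi_S^c(t)|^2$. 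Integrating against $\mu$ and using Theorem~\ref{thm:SDD} yields the sandwich
\[
m_c\,\mathrm{MMD}^2[\mathcal{F},F_T^c,F_S^c]\;\le\; d_c(F_T^c,F_S^c)\;\le\;\mathrm{MMD}^2[\mathcal{F},F_T^c,F_S^c].
\]
Where the phase is undefined, i.e.\ where $a=0$ or $b=0$, the phase term carries the factor $ab=0$, so any convention for $\theta$ works.

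The forward direction runs as follows. Every $d_c$ is nonnegative, so $d(F_T,F_S)=0$ forces $d_c=0$ for every class $c$. The lower bound then gives $\mathrm{MMD}[\mathcal{F},F_T^c,F_S^c]=0$. Since $k$ is universal, Theorem~\ref{thm:universalMMD} gives $F_T^c=F_S^c$, and this step is where $\alpha(c)$ must lie strictly inside $(0,1)$. Finally, the joint distributions are mixtures over classes, $P_T=\sum_c \pi_c\,\delta_c\otimes F_T^c$ and likewise for $P_S$. The class priors $\pi_c$ are equal by assumption, so $P_T=P_S$.

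For the converse, suppose $P_T=P_S$. Conditioning on the label, which has positive prior probability for every class present, gives $F_T^c=F_S^c$ for each $c$. The characteristic functions then agree, so $a=b$ and $\Delta\theta=0$ modulo $2\pi$ wherever $a>0$. Hence every $d_c$ vanishes, and so does $d$.

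The main obstacle is making the use of Theorem~\ref{thm:universalMMD} legitimate. That theorem is stated on a compact metric space, while the kernel here lives on $\mathbb{R}^d$. I would handle this in one of two ways. The first is to note that the features are normalized, as in Section~\ref{sec:sampling_scale}, so they are supported in a compact set on which the restricted kernel is universal. The second, if that is not enough, is to argue directly on $\mathbb{R}^d$: for the RBF or Laplace kernel the spectral measure has a density that is strictly positive everywhere, so $\int|\Delta\varphi|^2\,\mathrm{d}\mu=0$ forces $\Delta\varphi=0$ almost everywhere. By continuity of characteristic functions, $\Delta\varphi=0$ everywhere, and the uniqueness theorem for characteristic functions then gives $F_T^c=F_S^c$. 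I would present the second route as the primary argument, since it relies only on the support of $\mu$, and mention the compact-domain route as the justification that matches the hypothesis ``universal'' in the statement.
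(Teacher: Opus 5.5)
Your proof is correct and follows essentially the same route as the paper. Nonnegativity of both terms with strictly positive weights forces $\varphi_T^c=\varphi_S^c$ $\mu$-a.e.; your sandwich $m_c\,\mathrm{MMD}^2\le d_c\le \mathrm{MMD}^2$ is a quantitative packaging of the paper's pointwise argument. From there the paper uses exactly your primary route: strictly positive spectral density, continuity of characteristic functions, and the uniqueness theorem, followed by mixing over the shared class prior.

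You are more careful than the paper in one place. Writing $P_T=\sum_c \pi_c\,\delta_c\otimes F_T^c$ as a joint law over (label, feature) is what makes the converse valid. The paper's notation $P_T=\sum_c \pi(c)P_T^c$ is a marginal mixture, and equality of mixtures does not force the class-conditionals to coincide, so the paper's ``immediate'' converse needs your interpretation.
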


\begin{proof}
We first note that for each class $c$ and frequency $t$, both terms inside the integrand are nonnegative:
\[
\big||\varphi_T^c(t)| - |\varphi_S^c(t)|\big|^2 \ge 0,\qquad
|\varphi_T^c(t)|\,|\varphi_S^c(t)|\big(1-\cos(\theta_T^c(t)-\theta_S^c(t))\big) \ge 0,
\]
and $\alpha(c)\in(0,1)$ implies the corresponding coefficients are strictly positive. Together with $\mu$ being a nonnegative measure, this yields
$d_c(F_T^c,F_S^c)\ge 0$ and hence $d(F_T,F_S)\ge 0$.

Now suppose $d(F_T,F_S)=0$. Since each $d_c\ge 0$, we must have $d_c(F_T^c,F_S^c)=0$ for every class $c$.
For a fixed $c$, define
\[
a(t):=|\varphi_T^c(t)|,\quad b(t):=|\varphi_S^c(t)|,\quad \Delta\theta(t):=\theta_T^c(t)-\theta_S^c(t).
\]
Let
\[
g_c(t):=\alpha(c)\,(a(t)-b(t))^2 + 2(1-\alpha(c))\,a(t)b(t)\big(1-\cos\Delta\theta(t)\big).
\]
Then $g_c(t)\ge 0$ and
\[
d_c(F_T^c,F_S^c) = \int_{\mathbb{R}^d} g_c(t)\,\mathrm{d}\mu(t)=0.
\]
Since $\mu$ is finite and $g_c\ge 0$, the integral being zero implies $g_c(t)=0$ for $\mu$-almost every $t$.

Because $\alpha(c)\in(0,1)$, both coefficients in $g_c(t)$ are strictly positive. Thus $g_c(t)=0$ $\mu$-a.e.\ implies
\[
(a(t)-b(t))^2 = 0
\quad\text{and}\quad
a(t)b(t)\big(1-\cos\Delta\theta(t)\big)=0
\quad\text{for $\mu$-a.e.\ $t$}.
\]
From $(a(t)-b(t))^2 = 0$ we obtain $a(t)=b(t)$ for $\mu$-a.e.\ $t$.
Substituting $a(t)=b(t)$ into the second constraint yields
\[
a(t)^2\big(1-\cos\Delta\theta(t)\big)=0
\quad\text{for $\mu$-a.e.\ $t$}.
\]
Hence, for $\mu$-almost every $t$ with $a(t)>0$, we must have $1-\cos\Delta\theta(t)=0$, i.e.\ $\cos\Delta\theta(t)=1$ and thus
$\Delta\theta(t)\in 2\pi\mathbb{Z}$. For $a(t)=b(t)=0$, both characteristic functions vanish and are trivially equal.

Combining $a(t)=b(t)$ with $\cos\Delta\theta(t)=1$ wherever $a(t)>0$ shows that
\[
\varphi_T^c(t) = \varphi_S^c(t)
\quad\text{for $\mu$-almost every $t\in\mathbb{R}^d$}.
\]

Next, recall that for the shift-invariant universal kernels we use (e.g.\ the RBF kernel), the spectral measure $\mu$ admits a density 
$p(t)>0$ with respect to Lebesgue measure on $\mathbb{R}^d$, and characteristic functions are continuous.
If there existed some $t_0$ with $\varphi_T^c(t_0)\ne \varphi_S^c(t_0)$, continuity would imply that 
$|\varphi_T^c(t)-\varphi_S^c(t)|>\varepsilon$ on an open neighborhood $U$ of $t_0$, and hence $\mu(U)>0$ because $p(t)>0$ everywhere.
This contradicts $\varphi_T^c(t)=\varphi_S^c(t)$ for $\mu$-almost every $t$. Therefore, in fact
\[
\varphi_T^c(t)\equiv \varphi_S^c(t)\quad\text{for all }t\in\mathbb{R}^d.
\]

By the uniqueness theorem for characteristic functions, equality of characteristic functions implies equality of distributions, so
$F_T^c$ and $F_S^c$ (equivalently, the class-conditional feature distributions $P_T^c$ and $P_S^c$) coincide for every class $c$.

Finally, under our distillation setup, the label prior is shared between real and synthetic data. 
Let $\{\pi(c)\}_c$ denote this common class prior. Then
\[
P_T = \sum_c \pi(c)\, P_T^c = \sum_c \pi(c)\, P_S^c = P_S.
\]
This proves that $d(F_T,F_S)=0$ implies $P_T=P_S$. The converse direction is immediate: if $P_T=P_S$, then all
class-conditional distributions coincide, so each integrand in $d_c$ vanishes pointwise and hence $d(F_T,F_S)=0$.
\end{proof}

\section{Relation to Higher-Order Distribution Matching Methods}
\label{sec:appendix_high_order}
Several recent dataset distillation methods attempt to go beyond first-order mean alignment by incorporating higher-order statistics or alternative probability metrics. Here we clarify how our Class-Aware Spectral Distribution Matching (CSDM) relates to five representative approaches: M3D, IID, DSDM, NCFM, and Wasserstein-based dataset distillation.

\subsection{Dataset Distillation by Aligning High-Order Information}

IID~\citep{IID} and DSDM~\citep{DSDM} both follow the standard distribution-matching (DM) paradigm: they start from the linear-kernel MMD objective ($L_{\mathrm{DM}}$) (i.e., first-order mean feature matching) and then add explicit higher-order and structural regularizers. Our method instead strengthens the discrepancy measure itself by replacing the linear MMD with a universal spectral distance (SDD), so that higher-order information is captured implicitly by the base metric rather than by manually crafted moment terms.

Let ($L_{\mathrm{DM}}$) denote the standard DM loss that matches the feature means of the real dataset ($T$) and the synthetic dataset ($S$) via a linear-kernel MMD:
\begin{equation}
L_{\mathrm{DM}}
= \mathbb{E}_{\theta \sim P_{\theta}}
\Bigg|
\frac{1}{|T|}\sum_{x_i \in T} \psi_\theta(x_i)-
\frac{1}{|S|}\sum_{s_j \in S} \psi_\theta(s_j)
\Bigg|_2^2 ,
\end{equation}
which is equivalent to MMD with a linear kernel acting on the feature space and therefore only aligns first-order (mean) statistics.

\paragraph{IID: linear MMD with class centralization and covariance matching.}

IID augments this base loss with two additional regularizers: a class centralization constraint ($L_{\mathrm{CC}}$) (inter-sample) and a covariance matching constraint ($L_{\mathrm{CM}}$) (inter-feature). Concretely, if ($\psi(s_{c,j})$) denotes the feature of the ($j$)-th synthetic sample in class ($c$) and ($\bar{\psi}(s_c)$) the class-wise synthetic mean, IID defines
\begin{equation}
L_{\mathrm{CC}}
= \sum_{c=1}^C
\sum_{j=1}^K
\max\bigl(
0,\,
\exp\bigl(\alpha |\psi(s_{c,j}) - \bar{\psi}(s_c)|_2^2\bigr) - \beta
\bigr),
\end{equation}
which explicitly pulls synthetic features of the same class towards their class center.

To encode inter-feature relations, IID further computes per-class covariance matrices of real and synthetic data, ($\Sigma^T_c$) and ($\Sigma^S_c$), and minimizes their Frobenius distance:
\begin{equation}
L_{\mathrm{CM}} = \sum_{c=1}^C \bigl|\Sigma^S_c - \Sigma^T_c\bigr|_F^2,
\end{equation}
The overall objective is
\begin{equation}
L_{\mathrm{IID}}
= L_{\mathrm{DM}}
- \lambda_{\mathrm{CC}} L_{\mathrm{CC}}
- \lambda_{\mathrm{CM}} L_{\mathrm{CM}} .
\end{equation}

\paragraph{DSDM: per-class prototype matching with covariance and stability regularizers.}

DSDM starts from per-class prototype matching using a fixed feature extractor. Let ($f_i$) and ($\hat{f}_j$) be embeddings of real and synthetic samples. Per-class prototypes are
\begin{equation}
\mu_{T,c} = \frac{1}{|B^T_c|}\sum_{x_i \in B^T_c} f_i,\quad
\mu_{S,c} = \frac{1}{|B^S_c|}\sum_{s_j \in B^S_c} \hat{f}_j .
\end{equation}

Prototype alignment loss:
\begin{equation}
L_{\mathrm{proto}}
= \sum_{c=1}^C
\bigl|\mu_{T,c} - \mu_{S,c}\bigr|_2^2 .
\end{equation}

Semantic diversity alignment:
\begin{equation}
L_{\mathrm{SD}}
= \sum_{c=1}^C
\frac{1}{d}
\bigl|\mathrm{Cov}_{T,c} - \mathrm{Cov}_{S,c}\bigr|_F^2 .
\end{equation}

Historical prototype alignment:
\begin{equation}
L_{h}
= \sum_{c=1}^C \bigl|\mu_{S,c} - h_c^{(t-1)}\bigr|_2^2 .
\end{equation}

Full objective:
\begin{equation}
L_{\mathrm{DSDM}}
= L_{\mathrm{proto}}
- \lambda_{1} L_{\mathrm{SD}}
- \lambda_{2} L_{h}.
\end{equation}

\paragraph{A CF-based view: covariance regularizers match the second-order log CF term.}

From the viewpoint of characteristic functions, the covariance penalties used in IID and DSDM can be precisely interpreted as matching the second-order structure of the underlying distributions. Formally, this relationship is captured by the following theorem:

\begin{theorem}[Mean--covariance regularization as low-order cumulant matching]
\label{thm:mean_covariance_matching}
Let $P,Q$ be probability measures on $\mathbb{R}^d$ with finite second moments. Denote their means and covariance matrices by
\[
\mu_P = \mathbb{E}_P[X],\quad
\Sigma_P = \mathrm{Cov}_P(X),\qquad
\mu_Q = \mathbb{E}_Q[Y],\quad
\Sigma_Q = \mathrm{Cov}_Q(Y).
\]
Let $\varphi_P(t) = \mathbb{E}_P[e^{i t^\top X}]$ and $\varphi_Q(t) = \mathbb{E}_Q[e^{i t^\top Y}]$ be the characteristic functions, and define the log CFs
\[
\psi_P(t) = \log \varphi_P(t),\qquad
\psi_Q(t) = \log \varphi_Q(t),
\quad t \in \mathbb{R}^d.
\]

Then, the first derivative of the log CF at the origin encodes the mean:
\[
\nabla_t \psi_P(0) = i \mu_P,\qquad
\nabla_t \psi_Q(0) = i \mu_Q.
\]

The second derivative (Hessian) of the log CF at the origin encodes the covariance:
\[
\nabla_t^2 \psi_P(0) = - \Sigma_P,\qquad
\nabla_t^2 \psi_Q(0) = - \Sigma_Q.
\]

Consequently, a loss of the form
\[
L_{\mathrm{mean+cov}}(P,Q)
= \bigl|\mu_P - \mu_Q\bigr|_2^2
+
\lambda \bigl|\Sigma_P - \Sigma_Q\bigr|_F^2
\]
can be equivalently written as
\[
L_{\mathrm{mean+cov}}(P,Q)
= \bigl|\nabla_t \psi_P(0) - \nabla_t \psi_Q(0)\bigr|_2^2
+ \lambda \bigl|\nabla_t^2 \psi_P(0) - \nabla_t^2 \psi_Q(0)\bigr|_F^2,
\]
i.e., it matches only the first two cumulants (mean and covariance) of $P$ and $Q$ by aligning the first and second derivatives of their log characteristic functions at $t=0$.
\end{theorem}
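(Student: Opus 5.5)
The argument is a direct computation with the characteristic function near the origin, followed by a substitution into the loss. Since $P$ has finite second moments, $\varphi_P$ is twice continuously differentiable. Differentiation under the expectation (dominated convergence, using $|X|,|X|^2$ integrable) gives
\[
\nabla_t\varphi_P(t)=\mathbb{E}_P[\,iX\,e^{it^\top X}],\qquad
\nabla_t^2\varphi_P(t)=-\mathbb{E}_P[\,XX^\top e^{it^\top X}].
\]
At $t=0$ these become $\varphi_P(0)=1$, $\nabla\varphi_P(0)=i\mu_P$ and $\nabla^2\varphi_P(0)=-\mathbb{E}_P[XX^\top]$. Because $\varphi_P$ is continuous with $\varphi_P(0)=1$, it does not vanish on a neighborhood of the origin. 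On that neighborhood the principal branch gives a well-defined $C^2$ function $\psi_P=\log\varphi_P$. Only derivatives at $0$ are needed, so no global branch issues arise.

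Next I apply the chain rule. The gradient is $\nabla\psi_P=\nabla\varphi_P/\varphi_P$, and the Hessian is
\[
\nabla^2\psi_P=\frac{\nabla^2\varphi_P}{\varphi_P}-\frac{\nabla\varphi_P\,\nabla\varphi_P^{\top}}{\varphi_P^2}.
\]
Evaluating at $0$ gives $\nabla\psi_P(0)=i\mu_P$. For the Hessian,
\[
\nabla^2\psi_P(0)=-\mathbb{E}_P[XX^\top]-(i\mu_P)(i\mu_P)^\top=-\mathbb{E}_P[XX^\top]+\mu_P\mu_P^\top=-\Sigma_P.
\]
The identical argument applies to $Q$.

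For the equivalence of losses, I substitute these identities. The covariance term is immediate: $\nabla^2\psi_P(0)-\nabla^2\psi_Q(0)=-(\Sigma_P-\Sigma_Q)$, and the Frobenius norm ignores the sign. The mean term needs more care. We have $\nabla\psi_P(0)-\nabla\psi_Q(0)=i(\mu_P-\mu_Q)$, which is a complex vector. Its squared norm equals $\|\mu_P-\mu_Q\|_2^2$ only if $|\cdot|_2$ is read as the Hermitian norm $\sqrt{v^{*}v}$. Under the bilinear form $v^\top v$ it would instead give $-\|\mu_P-\mu_Q\|^2$. I would therefore state explicitly that the norm is Hermitian, i.e. the modulus of the complex vector. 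Making this convention precise is the main subtle point. The rest is the routine verification that differentiation under the integral sign is justified.

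Finally, I would note the interpretive claim: these are exactly the first two cumulants, the Taylor coefficients of $\psi$ at $0$. Hence the loss is blind to cumulants of order $\ge 3$. A concrete illustration is two distributions with equal means and covariances but different third moments (for example, a Gaussian versus a suitably shifted exponential). This contrasts with Theorem~\ref{thm:GaussianMMDMomentExpansion}, where all orders of moments enter.
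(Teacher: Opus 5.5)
Your proof is correct and follows the paper's plan: compute $\nabla\varphi_P(0)=i\mu_P$ and $\nabla^2\varphi_P(0)=-\mathbb{E}_P[XX^\top]$, transfer these to $\psi_P=\log\varphi_P$, and substitute into the loss. The only difference is that you differentiate $\log\varphi_P$ directly by the chain rule, whereas the paper composes the second-order expansion of $\varphi_P$ with $\log(1+u)=u-\tfrac12u^2+\cdots$. Your version is cleaner: it needs no appeal to uniqueness of Taylor coefficients, and the paper's $O(|t|^3)$ remainder is not justified under finite second moments alone (only $o(|t|^2)$ is). Your local choice of branch for $\log$ and your Hermitian-norm convention for the complex vector $i(\mu_P-\mu_Q)$ make explicit what the paper's final substitution $\mu_P=-i\nabla_t\psi_P(0)$ leaves implicit.
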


\begin{proof}
For each coordinate $t_k$,
\[
\frac{\partial}{\partial t_k} \varphi_P(t)
= \mathbb{E}_P\bigl[i X_k e^{i t^\top X}\bigr].
\]
Evaluating at $t=0$ yields
\[
\left.\frac{\partial}{\partial t_k} \varphi_P(t)\right|_{t=0}
= i\,\mathbb{E}_P[X_k] = i \mu_{P,k},
\]
so in vector form $\nabla_t \varphi_P(0) = i \mu_P$. Similarly,
\[
\frac{\partial^2}{\partial t_j \partial t_k} \varphi_P(t)
= \mathbb{E}_P\bigl[- X_j X_k e^{i t^\top X}\bigr]
\quad\Rightarrow\quad
\nabla_t^2 \varphi_P(0) = - \mathbb{E}_P[ X X^\top ].
\]

Now consider the log CF $\psi_P(t) = \log \varphi_P(t)$. A Taylor expansion of $\varphi_P(t)$ around $t=0$ gives
\[
\varphi_P(t)
= 1 + i t^\top \mu_P - \tfrac12 t^\top \mathbb{E}_P[XX^\top] t + O(|t|^3).
\]
Write $\varphi_P(t) = 1 + a(t) + b(t)$ with $a(t) = i t^\top \mu_P = O(|t|)$ and $b(t) = -\tfrac12 t^\top \mathbb{E}_P[XX^\top] t = O(|t|^2)$. Using $\log(1+u) = u - \tfrac12 u^2 + O(|u|^3)$ and retaining terms up to order $|t|^2$, we obtain
\[
\begin{aligned}
\psi_P(t)
&= \log\bigl(1 + a(t) + b(t)\bigr) \\
&\approx a(t) + b(t) - \tfrac12 a(t)^2 \\
&= i t^\top \mu_P
- \tfrac12 t^\top \mathbb{E}_P[XX^\top] t
- \tfrac12 (i t^\top \mu_P)^2.
\end{aligned}
\]
Because $(i t^\top \mu_P)^2 = - (t^\top \mu_P)^2 = - t^\top (\mu_P \mu_P^\top) t$, we get
\[
\psi_P(t)
\approx i t^\top \mu_P
- \tfrac12 t^\top \bigl(\mathbb{E}_P[XX^\top] - \mu_P \mu_P^\top\bigr) t
= i t^\top \mu_P - \tfrac12 t^\top \Sigma_P t.
\]
Taking derivatives at $t = 0$ gives
\[
\nabla_t \psi_P(0) = i \mu_P,\qquad
\nabla_t^2 \psi_P(0) = - \Sigma_P.
\]
The same argument applies to $Q$, yielding the claimed identities for $\psi_Q$. Substituting $\mu_P = -i\,\nabla_t \psi_P(0)$ and $\Sigma_P = -\nabla_t^2 \psi_P(0)$ (and similarly for $Q$) into $L_{\mathrm{mean+cov}}(P,Q)$ proves the result.
\end{proof}

\medskip

Applied to each class ($c$), the covariance regularizers in IID and DSDM,
\begin{equation}
L_{\mathrm{CM}} = \sum_c |\Sigma^S_c - \Sigma^T_c|_F^2,\qquad
L_{\mathrm{SD}} \propto \sum_c |\mathrm{Cov}_{S,c} - \mathrm{Cov}_{T,c}|_F^2,
\end{equation}
match the \emph{second-order} term in the Taylor expansion of the log CFs at $t=0$, whereas the linear-MMD/prototype terms match the \emph{first-order} term.

\subsection{Dataset Distillation by Minimizing Maximum Mean Discrepancy}

M3D~\citep{M3D} extends distribution matching by embedding the feature distributions of real and synthetic data into an RKHS induced by a kernel $K$ and minimizing the squared MMD between them, rather than restricting to a linear kernel. It further adds explicit second- and third-order regularizers that encourage alignment of variance and skewness, and ablations show that these terms improve performance over plain linear-kernel MMD. However, the kernel ablation in Fig. 6(b) of M3D reports almost identical performance for Gaussian, linear, and polynomial kernels on CIFAR-10, even though only the Gaussian kernel is universal. This suggests that the potential advantage of universal kernels is not fully exploited.

\subsection{Dataset Distillation with Neural Characteristic Function}

NCFM~\citep{NCFM} takes a characteristic-function-based approach and defines a Neural Characteristic Function Discrepancy (NCFD). It introduces an auxiliary feature extractor and a lightweight sampling network that parameterizes the distribution of frequency arguments, and formulates distillation as a min–max problem: the sampling network maximizes NCFD to learn an informative discrepancy, while the synthetic data are optimized to minimize it. A scalar hyperparameter $\alpha$ balances the contributions of amplitude and phase, similar in spirit to our amplitude–phase decomposition. However, in NCFM the sampling distribution over frequencies is learned heuristically so there is no formal guarantee that the resulting discrepancy defines a universal metric on distributions. Empirically, our experiments show that NCFM performs strongly on balanced CIFAR-10 but degrades substantially under increasing class imbalance, whereas CSDM remains stable on long-tailed CIFAR-10-LT.

We argue that in NCFM’s min-max framework, the objective of maximizing the characteristic function discrepancy (CFD) to train the sampling network does not fully align with the ultimate goal (e.g., classification accuracy). As a result, the “max” step cannot guarantee that the sampling network reliably learns which frequency components are truly critical for distribution matching—a limitation that becomes especially pronounced under long-tailed settings. Therefore, to improve the generalization of the sampling network, it is necessary to further investigate CFD from a theoretical perspective, particularly its relationship with downstream task objectives.

\subsection{Dataset Distillation with Wasserstein Metric}

WMDM~\citep{liu2023wasserstein} follows a different line: instead of Integral Probability Metric (IPM) based on kernels or characteristic functions, it adopts the optimal transport (OT) distance between feature distributions. The Wasserstein metric has appealing geometric properties and often yields stronger empirical performance than vanilla MMD when used in similar DD frameworks. However, precise OT computation is typically expensive and requires additional approximations such as sliced Wasserstein or regularized barycenter computation to remain tractable for large-scale distillation. From our perspective, these OT-based methods are complementary to our work: they explore the transport route to distribution matching, whereas CSDM systematically develops the universal-kernel / spectral route.

\subsection{Summary and connection to CSDM.}

In summary, M3D, IID, and DSDM all recognize that pure first-order mean alignment is insufficient and incorporate selected higher-order information (variance, covariance, or Gaussian kernel), yielding clear improvements on balanced benchmarks. NCFM and Wasserstein-based distillation propose alternative discrepancy measures—characteristic-function-based IPMs and OT distances—with different geometric interpretations. CSDM differs in two key aspects: (i) it revisits the foundation of distribution matching via universal-kernel MMD and its spectral representation, leading to a theoretically grounded metric (SDD) that can distinguish full distributions without parametric assumptions; and (ii) it explicitly integrates class-aware amplitude–phase weighting to address long-tailed scenarios, prioritizing diversity for head classes and realism for tail classes. These design choices are orthogonal to many of the structural regularizers used in prior work, and in principle CSDM could be combined with techniques such as class centralization (IID) or semantic covariance modeling (DSDM), which we leave as promising directions for future research.

\section{The Role of Amplitude and Phase in Diversity and Realism}

Classical results in signal and image processing have long established that amplitude and phase play markedly different perceptual and statistical roles. In one-dimensional and two-dimensional Fourier analysis, the phase spectrum encodes most of the structural and edge information, while the amplitude spectrum mainly controls the energy allocation across frequencies. \cite{oppenheim1981importance} showed that reconstructing an image using the phase of one image and the amplitude of another yields a reconstruction whose semantic content and recognizable structures are almost entirely determined by the phase component; conversely, keeping amplitude but randomizing phase destroys recognizable shapes and produces visually implausible patterns. Similar conclusions are summarized in complex-valued signal processing~\citep{mandic2009complex}, where phase is regarded as the carrier of fine geometric alignment and edge locations, whereas amplitude modulates contrast, texture, and overall variability. In more recent work on generative modeling and domain generalization, frequency-domain manipulations likewise treat amplitude as a handle on “style” or diversity (how energy is spread over frequencies) and phase as the factor that preserves semantic structure and realism~\citep{RCFGAN,lee2023decompose}).

Under this lens, our amplitude–phase decomposition of the squared spectral discrepancy
\begin{equation}
|\varphi_\mathcal{T}^c(t) - \varphi_\mathcal{S}^c(t)|^2
= \underbrace{\big||\varphi_\mathcal{T}^c(t)| - |\varphi_\mathcal{S}^c(t)|\big|^2}_{\text{amplitude difference}}
+ \underbrace{2|\varphi_\mathcal{T}^c(t)|\,|\varphi_\mathcal{S}^c(t)|\big(1-\cos(\theta_\mathcal{T}^c(t)-\theta_\mathcal{S}^c(t))\big)}_{\text{phase difference}} ,
\end{equation}
admits a natural interpretation.

\begin{itemize}[leftmargin=10pt, topsep=0pt, itemsep=1pt, partopsep=1pt, parsep=1pt]
    \item \textbf{Amplitude difference.}  
    The amplitude difference compares the “spectral envelopes” of real and synthetic features, i.e., how energy is distributed across frequencies. Matching this term encourages the synthetic features to reproduce not only the mean but also the dispersion and multi-modality of the real distribution: when the real class spreads its energy over a rich set of frequencies, minimizing the amplitude discrepancy forces the synthetic class to populate a similarly diverse set of spectral components. In this sense, the amplitude term primarily promotes \emph{diversity}—it penalizes synthetic distributions that collapse onto a narrow band of frequencies or miss important modes present in the real data.

    \item \textbf{Phase difference.}  
    The phase difference is active only at frequencies where both real and synthetic distributions carry non-negligible energy, through the multiplicative factor $|\varphi_T^c(t)|\,|\varphi_S^c(t)|$. There, it penalizes misalignment of phases via $1-\cos(\theta_T^c(t)-\theta_S^c(t))$. This is directly analogous to classical Fourier-domain results: when amplitude is fixed, misaligned phases cause destructive interference and yield unrealistic, structurally distorted signals, whereas aligned phases preserve edges and object shapes. In our framework, minimizing the phase term enforces \emph{structural alignment} between real and synthetic feature distributions at those informative frequencies, which we interpret as enhancing \emph{realism} of individual synthetic examples rather than merely their aggregate statistics~\citep{NCFM}).
\end{itemize}

The class-aware design in CSDM exploits this decomposition to cope with long-tailed class imbalance. Head classes, which contain abundant real samples, inherently exhibit rich intra-class variability; therefore, the primary risk in data distillation is the loss of diversity (i.e., collapsing multiple modes into only a few synthetic ones). For these classes, we choose $\alpha(c)$ closer to~1, thereby assigning larger weight to the amplitude term. This biases the optimization toward matching the full frequency envelope of the head-class features, encouraging the synthetic set to cover multiple modes and fine-grained variations rather than overspecializing on a few high-probability patterns.

Tail classes, in contrast, possess very limited real data. With only a few samples available, fully reproducing the diversity of the underlying distribution is infeasible; instead, the more pressing concern is preventing the synthetic data from becoming unrealistic or drifting away from true class semantics. For such classes, we set $\alpha(c)$ to be smaller, increasing the relative importance of the phase term. This places more emphasis on aligning the \emph{structural} aspects of the feature distributions—those encoded in the phase at frequencies where the real tail-class data actually exhibit energy—ensuring that each synthetic example remains faithful to the scarce yet critical patterns observed in the real data. The empirical ablations in Figure~\ref{fig:alpha} further confirm that emphasizing amplitude for head classes and phase for tail classes yields the strongest long-tailed performance, and that reversing this weighting degrades accuracy, which is consistent with the above intuition.

Importantly, Proposition~\ref{prop:class_aware_discriminativity} shows that this class-dependent convex combination does \emph{not} compromise the discriminativity of our distance: as long as each $\alpha(c)\in(0,1)$, the overall discrepancy $d(F_T,F_S)$ equals zero if and only if the real and synthetic joint distributions coincide. In other words, the amplitude and phase terms always cooperate to define a valid, class-aware metric; the coefficients $\alpha(c)$ merely reshape the optimization landscape to reflect the distinct needs of head and tail classes, without altering the underlying notion of “matching the distribution”.

\section{Comparison with Frequency-domain Distillation Methods}

Although CSDM, FreD~\citep{shin2023frequency}, and NSD~\citep{yang2024neural} all exploit frequency-domain structure, they intervene at fundamentally different stages of the dataset distillation pipeline. To clarify these distinctions, we begin with a unified optimization view of dataset distillation.

Let $D$ denote the real dataset and $S(\theta)$ denote the synthetic dataset parameterized by $\theta$. A broad class of dataset distillation methods can be written as:
\begin{equation}
\min_{\theta}\; 
\mathcal{L}_{\text{metric}}\!\left(
\Phi_{\text{syn}}(\theta),
\Phi_{\text{real}}(D)
\right),
\end{equation}
where the framework decomposes into two orthogonal components:

\begin{itemize}[leftmargin=10pt, topsep=0pt, itemsep=1pt, partopsep=1pt, parsep=1pt]
    \item \textbf{Data Parameterization ($\Phi_{\text{syn}}$):}  
    Maps the learnable parameters $\theta$ into the data space (e.g., pixel values, frequency coefficients, generator outputs), determining how synthetic data are represented and stored.

    \item \textbf{Matching Metric ($\mathcal{L}_{\text{metric}}$):}  
    Quantifies the discrepancy between the real and synthetic distributions, specifying which aspects of the data distribution are aligned and how this alignment is enforced.
\end{itemize}

FreD and NSD improve dataset distillation by designing frequency-/spectral-based parameterization schemes for $\Phi_{\text{syn}}(\theta)$, aiming to enhance storage efficiency or sample quality under constrained budgets.

\begin{itemize}[leftmargin=10pt, topsep=0pt, itemsep=1pt, partopsep=1pt, parsep=1pt]
    \item
    FreD converts images into the frequency domain and directly optimizes a sparse set of frequency coefficients. Using a binary mask $M$ derived from the Explained Variance Ratio (EVR), FreD retains only the dominant spectral components, enabling compact storage of synthetic samples. The matching objective $\mathcal{L}_{\text{metric}}$ typically remains based on standard trajectory or gradient matching. Formally,
    \[
        S_{\text{FreD}}(\theta)
        = \mathcal{F}^{-1}\!\big( M \odot \theta_{\text{freq}} \big).
    \]

    \item 
    NSD addresses redundancy across synthetic samples by parameterizing the entire synthetic set as a high-dimensional tensor and applying spectral decomposition (e.g., Tucker factorization). Synthetic samples are generated via interactions between a shared spectral tensor and a kernel matrix, and the method is commonly optimized using existing objectives such as MTT.
\end{itemize}

Thus, FreD and NSD contribute parameter-efficient representations of $\theta$ by leveraging spectral structure, without altering the underlying notion of how real and synthetic distributions are compared.

In contrast, CSDM introduces a principled matching metric $\mathcal{L}_{\text{metric}}$ without imposing restrictions on how data are parameterized. It applies directly to standard pixel-based parameterizations and is also compatible with FreD- or NSD-style frequency parameterizations.

\begin{table}[tb!]
\centering
\caption{Experimental Results on CIFAR-10 and CIFAR-100 Datasets}
\label{tab:results}
\begin{tabular}{c|ccc|cc}
\toprule
\multirow{1}{*}{Dataset} & \multicolumn{3}{c}{CIFAR-10} & \multicolumn{2}{c}{CIFAR-100} \\

IPC & 1 & 10 & 50 & 10 & 50 \\
Ratio (\%) & 0.02 & 0.2 & 2 & 2 & 10 \\
\midrule
Random & $14.4 \pm 0.2$ & $26.0 \pm 1.0$ & $43.4 \pm 1.0$ & $15.1 \pm 0.5$ & $30.7 \pm 0.4$ \\
Herding & $21.5 \pm 1.2$ & $31.0 \pm 1.0$ & $46.3 \pm 1.1$ & $18.5 \pm 1.0$ & $34.5 \pm 0.6$ \\
Forgetting & $13.5 \pm 1.2$ & $23.3 \pm 1.0$ & $23.3 \pm 1.1$ & $15.1 \pm 1.0$ & $30.5 \pm 0.7$ \\
DC & $28.3 \pm 0.5$ & $44.9 \pm 0.5$ & $49.5 \pm 0.5$ & $25.5 \pm 0.5$ & $34.8 \pm 0.5$ \\
DSA & $28.8 \pm 0.7$ & $52.1 \pm 0.5$ & $54.5 \pm 0.5$ & $32.0 \pm 0.5$ & $42.8 \pm 0.5$ \\
DCC & $29.7 \pm 0.4$ & $50.4 \pm 0.4$ & $58.2 \pm 0.5$ & $30.0 \pm 0.4$ & $48.2 \pm 0.5$ \\
DSAC & $30.8 \pm 0.5$ & $51.4 \pm 0.5$ & $59.0 \pm 0.5$ & $29.0 \pm 0.5$ & $46.6 \pm 0.5$ \\
FrePo & $36.3 \pm 0.5$ & $55.3 \pm 0.5$ & $60.5 \pm 0.4$ & $35.5 \pm 0.6$ & $48.3 \pm 0.4$ \\
MTT & $46.3 \pm 0.8$ & $58.3 \pm 0.7$ & $67.6 \pm 0.5$ & $38.7 \pm 0.6$ & $45.7 \pm 0.8$ \\
ATT & $46.3 \pm 0.8$ & $57.5 \pm 0.6$ & $67.5 \pm 0.5$ & $39.3 \pm 0.4$ & $45.4 \pm 0.7$ \\
FTD & $46.8 \pm 0.5$ & $60.3 \pm 0.4$ & $67.9 \pm 0.5$ & $40.2 \pm 0.5$ & $45.4 \pm 0.4$ \\
TESLA & $46.3 \pm 0.5$ & $59.0 \pm 0.4$ & $67.9 \pm 0.5$ & $37.1 \pm 0.5$ & $44.1 \pm 1.2$ \\
CAFE & $30.3 \pm 1.1$ & $41.1 \pm 1.1$ & $46.4 \pm 1.1$ & $19.5 \pm 0.7$ & $35.7 \pm 1.1$ \\
DM & $43.0 \pm 1.5$ & $54.4 \pm 1.1$ & $67.6 \pm 1.5$ & $37.2 \pm 1.4$ & $46.6 \pm 1.9$ \\
IDM & $45.6 \pm 1.0$ & $58.6 \pm 1.0$ & $68.1 \pm 1.0$ & $38.2 \pm 1.0$ & $46.9 \pm 0.5$ \\
M3D & $47.1 \pm 1.0$ & $59.9 \pm 0.5$ & $69.3 \pm 1.0$ & $39.7 \pm 1.0$ & $47.6 \pm 0.5$ \\
IID & $47.1 \pm 1.0$ & $60.9 \pm 1.0$ & $70.1 \pm 1.0$ & $39.8 \pm 1.0$ & $47.2 \pm 1.0$ \\
DSDM & $46.5 \pm 0.7$ & $58.4 \pm 0.7$ & $67.0 \pm 0.7$ & $36.7 \pm 0.7$ & $45.4 \pm 0.5$ \\
G-VBSM & $46.5 \pm 0.5$ & $54.3 \pm 0.5$ & $60.4 \pm 0.5$ & $38.7 \pm 0.7$ & $45.7 \pm 0.7$ \\
FreD & $\textbf{60.6} \pm \textbf{0.8}$ & $70.3 \pm 0.3$ & $75.8 \pm 0.1$ & $42.7 \pm 0.2$ & $47.8 \pm 0.1$ \\
NCFM & $49.5 \pm 0.3$ & $\textbf{71.8} \pm \textbf{0.3}$ & $77.4 \pm 0.3$ & $48.7 \pm 0.5$ & $\textbf{54.7} \pm \textbf{0.2}$ \\
\rowcolor{cyan!10}CSDM (ours) & $47.3 \pm 0.3$ & $71.2 \pm 0.4$ & $\textbf{78.1} \pm \textbf{0.4}$ & $\textbf{50.4} \pm \textbf{0.3}$ & $54.1 \pm 0.2$ \\
\midrule
Whole Dataset & \multicolumn{3}{c|}{$84.8 \pm 0.1$} & \multicolumn{2}{c}{$56.2 \pm 0.3$} \\
\bottomrule
\end{tabular}
\end{table}

\section{Supplementary Experimental Results}
\subsection{Dataset and Experimental Setup}
Our evaluations were conducted on widely-used datasets: CIFAR-10, CIFAR-100~\citep{krizhevsky2009learning} (32$\times$32). We tested synthetic image budgets of 1, 10, 50, and 100 images per class (IPC). All the synthetic images are trained for 10k iterations. Following prior studies~\citep{LAD,DATM,drupi}, we used a 3-layer ConvNet with instance normalization. Specifically, the alpha value was set to 0.11 for the case of ipc=1, and was set to 0.7 for ipc values of 10 and 50. The model was evaluated over ten independent runs, and we report the mean and variance across these experiments.

\subsection{Comparison with Other Methods}
With this experimental setup, we evaluate the performance of our proposed method, CSDM, against a wide range of existing dataset distillation on both CIFAR-10 and CIFAR-100 datasets under various images-per-class (IPC) settings. The results are summarized in Table~\ref{tab:results}.

On CIFAR-10, our method achieves competitive or superior performance across all IPC settings. Specifically, CSDM attains an accuracy of \(78.1 \pm 0.4\%\) at IPC=50, outperforming all compared methods, including strong baselines such as FreD~\citep{shin2023frequency}, NCFM~\citep{NCFM}, and M3D~\citep{M3D}. At IPC=10, CSDM achieves \(71.2 \pm 0.4\%\), slightly below NCFM (\(71.8 \pm 0.3\%\)) but still demonstrating strong performance. Even under the extremely low-data regime of IPC=1, CSDM obtains \(47.3 \pm 0.3\%\), comparable to the top-performing methods.

On CIFAR-100, CSDM achieves the highest accuracy of \(50.4 \pm 0.3\%\) at IPC=10, surpassing all competitors. At IPC=50, it reaches \(54.1 \pm 0.2\%\), which is slightly lower than NCFM (\(54.7 \pm 0.2\%\)) but still highly competitive. 

CSDM belongs to the family of higher-order moment matching approaches that explicitly align statistics beyond first- and second-order moments between real and synthetic datasets.  CSDM consistently outperforms or significantly closes the gap with the current strongest methods in this category (IID~\citep{IID}, DSDM~\citep{DSDM}, M3D~\citep{M3D}, and NCFM~\citep{NCFM}).  
On CIFAR-10, CSDM surpasses NCFM by a substantial \textbf{+0.7\%} at IPC=50 (78.1\% vs. 77.4\%) and remains only 0.6\% behind at IPC=10, while outperforming IID, M3D, and DSDM by \textbf{8--11\%} at IPC=50 and \textbf{10--13\%} at IPC=10.  
On CIFAR-100, CSDM establishes a new state-of-the-art at IPC=10 with \textbf{+1.7\%} absolute gain over NCFM (50.4\% vs. 48.7\%), and reduces the gap to merely 0.6\% at IPC=50.  

These consistent improvements demonstrate that our CSDM captures and matches higher-order statistical structures more accurately and stably than prior moment-matching paradigms, especially under severe compression ratios and on fine-grained classification tasks.

\subsection{Different kernel methods comparison}
CIFAR-10LT were created by sampling from balanced sets, where the sample size for class $c$, $|\hat{\mathcal{D}}_c|$, follows the exponential decay $|\hat{\mathcal{D}}_c| = |\mathcal{D}_c|\mu^c$ with $\mu^c = \beta^{-(c/C)}$. Here, $C$ is the total number of classes and $\beta = \mathcal{D}_0 / \mathcal{D}_C$ is the imbalance factor~\citep{cui2019class}, where a larger $\beta$ indicates greater imbalance. 

To investigate the impact of class imbalance, we conducted experiments with different kernel methods on CIFAR-10-LT, measuring their performance in terms of classification accuracy under varying imbalance factors.
The results clearly demonstrate that our proposed method, CSDM (utilizing the RBF Kernel SDD) maintains superior robustness compared to other kernel-based approaches across all imbalance factors. While all methods experience performance degradation with increasing imbalance severity, CSDM shows a notably smaller performance drop than MMD and consistently achieves the highest accuracy.

\begin{figure}[H]
    \centering
    \vspace{0pt}
    \includegraphics[width=0.9\linewidth]{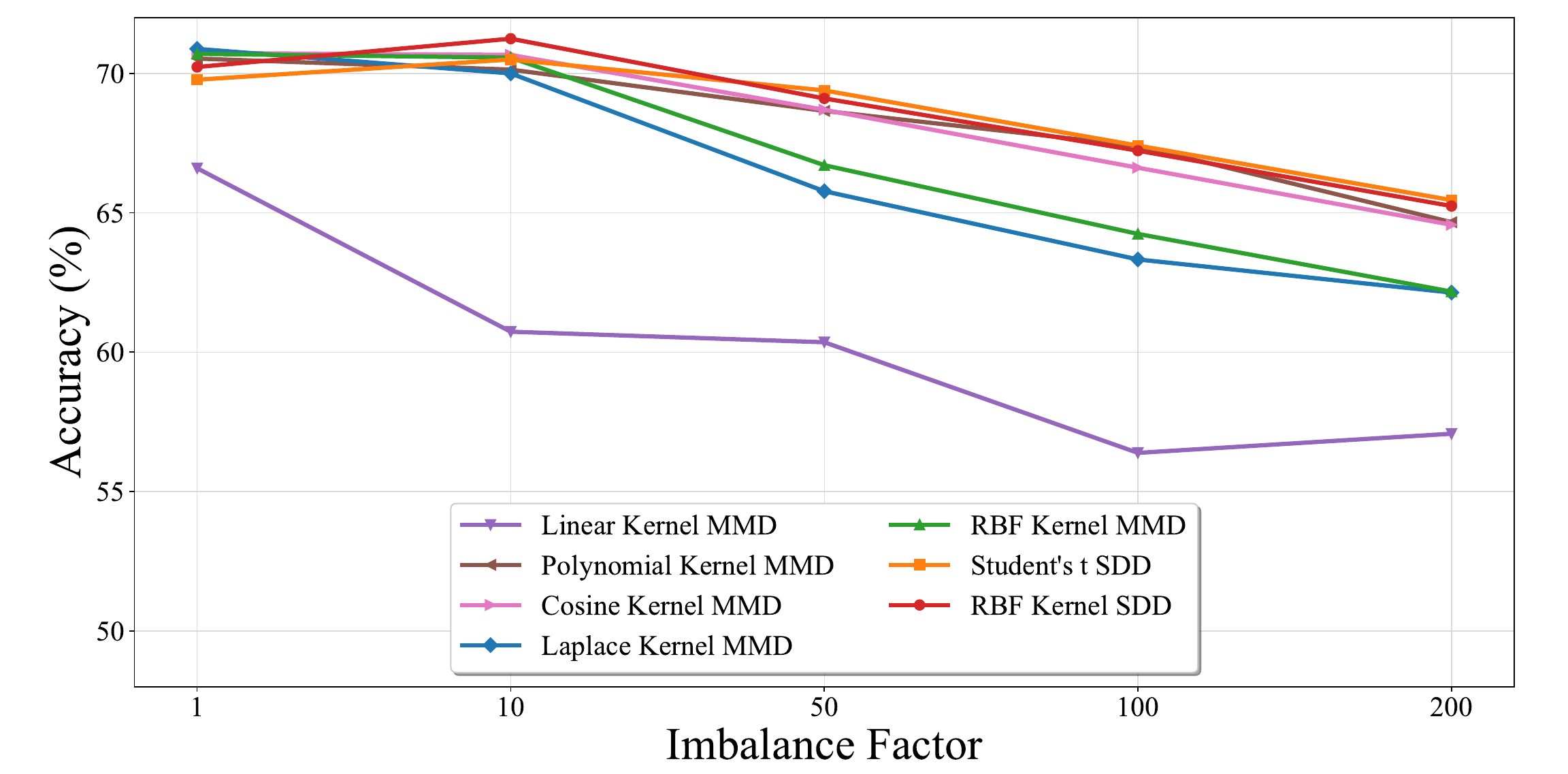}
    \vspace{-5pt}
    \caption{\textbf{Performance of Different Kernel Methods on CIFAR-10LT with Varying Imbalance Factors.}  }
    \vspace{0pt}
    \label{fig:kernels_comparison}
\end{figure}

\begin{figure}[H]
    \centering
    \vspace{0pt}
    \includegraphics[width=\linewidth]{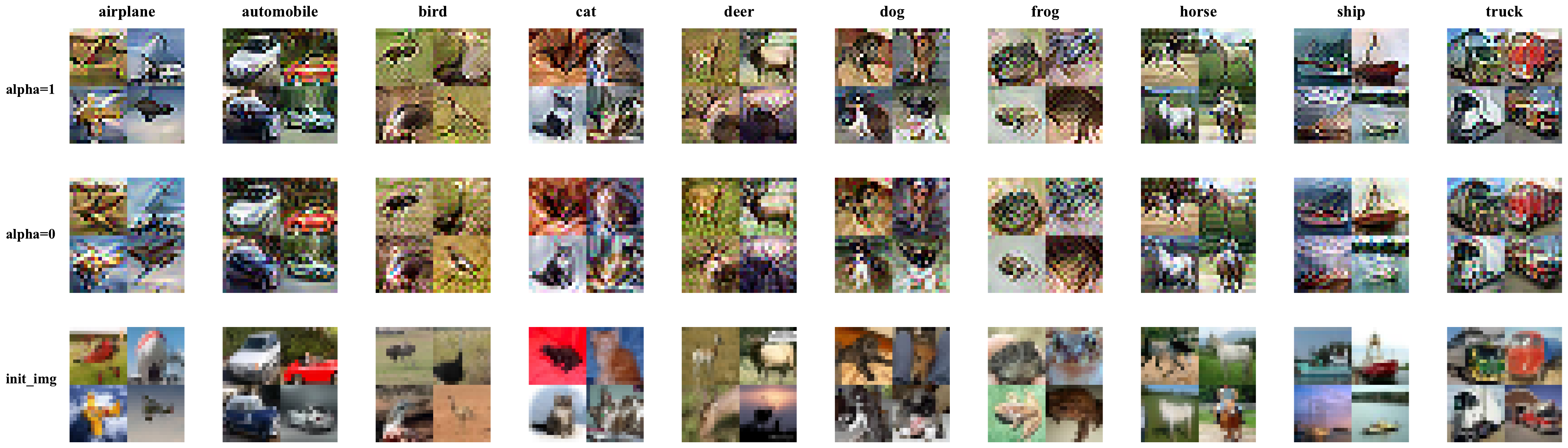}
    \vspace{-5pt}
    \caption{\textbf{Effect of the $\bm{\alpha}$ parameter: Images synthesized by CSDM on CIFAR-10 (IPC=10) for values of 0 and 1.} As shown in the images, amplitude-only distillation ($\alpha=1$) yields diverse but less realistic images, whereas phase-only distillation ($\alpha=0$) produces realistic images but suffers from limited diversity.}
    
    \vspace{0pt}
    \label{fig:example_figures}
\end{figure}

\end{document}